\def\eqref#1{equation~\ref{#1}}
\def\1{\bm{1}}
\def\mI{{\bm{I}}}
\DeclareMathAlphabet{\mathsfit}{\encodingdefault}{\sfdefault}{m}{sl}
\SetMathAlphabet{\mathsfit}{bold}{\encodingdefault}{\sfdefault}{bx}{n}
\newcommand{\E}{\mathbb{E}}
\DeclareMathOperator*{\argmin}{arg\,min}
\title{
	Probabilistic Implicit Scene Completion
}
\author{Dongsu Zhang, Changwoon Choi, Inbum Park \& Young Min Kim \\
	Department of Electrical and Computer Engineering, Seoul National University \\
	\texttt{\small 96lives@snu.ac.kr, changwoon.choi00@gmail.com,} \\ 
	\texttt{\small \{inbum0215, youngmin.kim\}@snu.ac.kr} \\
}
\begin{document}

\maketitle

\begin{abstract}
We propose a probabilistic shape completion method extended to the continuous geometry of large-scale 3D scenes.
Real-world scans of 3D scenes suffer from a considerable amount of missing data cluttered with unsegmented objects.
The problem of shape completion is inherently ill-posed, and high-quality result requires scalable solutions that consider multiple possible outcomes.
We employ the Generative Cellular Automata that learns the multi-modal distribution and transform the formulation to process large-scale continuous geometry.
The local continuous shape is incrementally generated as a sparse voxel embedding, which contains the latent code for each occupied cell.
We formally derive that our training objective for the sparse voxel embedding maximizes the variational lower bound of the complete shape distribution and therefore our progressive generation constitutes  a valid generative model.
Experiments show that our model successfully generates diverse plausible scenes faithful to the input, especially when the input suffers from a significant amount of missing data. %
We also demonstrate that our approach outperforms deterministic models even in less ambiguous cases with a small amount of missing data, which infers that probabilistic formulation is crucial for high-quality geometry completion on input scans exhibiting any levels of completeness.

\end{abstract}

\section {Introduction}

High-quality 3D data can create realistic virtual 3D environments or provide crucial information to interact with the environment for robots or human users~(\cite{varley2017shape}).
However, 3D data acquired from a real-world scan is often noisy and incomplete with irregular samples.
The task of 3D shape completion aims to recover the complete surface geometry from the raw 3D scans.
Shape completion is often formulated in a data-driven way using the prior distribution of 3D geometry, which often results in multiple plausible outcomes given incomplete and noisy observation.
If one learns to regress a single shape out of multi-modal shape distribution, one is bound to lose fine details of the geometry and produce blurry outputs as noticed with general generative models~(\cite{goodfellow2017gan_tutorial}).
If we extend the range of completion to the scale of scenes with multiple objects, the task becomes even more challenging with the memory and computation requirements for representing large-scale high resolution 3D shapes.

In this work, we present continuous Generative Cellular Automata (cGCA), which generates multiple continuous surfaces for 3D reconstruction.
Our work builds on Generative Cellular Automata (GCA)~(\cite{zhang2021gca}), which produces diverse shapes by progressively growing the object surface from the immediate neighbors of the input shape.
cGCA inherits the multi-modal and scalable generation of GCA, but overcomes the limitation of discrete voxel resolution producing  high-quality continuous surfaces.
Specifically, our model learns to generate diverse sparse voxels associated with their local latent codes, namely sparse voxel embedding, where each latent code encodes the deep implicit fields of continuous geometry near each of the occupied voxels~(\cite{chabra2020deep_local_shapes, chiyu2020local}).
Our training objective maximizes the variational lower bound for the log-likelihood of the surface distribution represented with sparse voxel embedding.
The stochastic formulation is modified from the original GCA, and theoretically justified as a sound generative model.

\begin{figure}
	\includegraphics[width=\textwidth]{./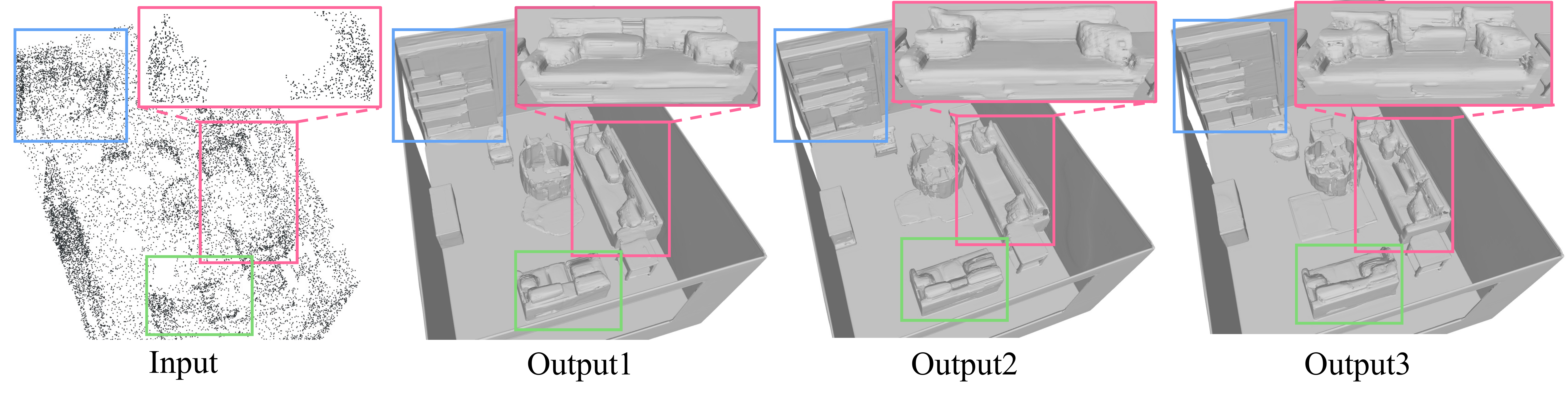}
	\vspace{-2.5em}
	\centering
	\caption[] {
		Three examples of complete shapes using cGCA given noisy partial input observation.
		Even when the raw input is severely damaged  (left), cGCA can generate plausible yet diverse complete continuous shapes.%
	}
	\label{fig:teaser}
\end{figure}

We demonstrate that cGCA can faithfully generate multiple plausible solutions of shape completion even for large-scale scenes with a significant amount of missing data as shown in Figure~\ref{fig:teaser}.
To the best of our knowledge, we are the first to tackle the challenging task of probabilistic scene completion, which requires not only the model to generate multiple plausible outcomes but also be scalable enough to capture the wide-range context of multiple objects.

We summarize the key contributions as follows:
(1) We are the first to tackle the problem of probabilistic \textit{scene} completion with partial scans, and provide a scalable model that can capture large-scale context of scenes.
(2) We present continuous Generative Cellular Automata, a generative model that produces \textit{diverse continuous surfaces} from a partial observation. 
(3) We modify infusion training (\cite{bordes2017infusion}) and prove that the formulation indeed increases the variational lower bound of data distribution, which verifies that the proposed progressive generation is a valid generative model.

\section {Preliminaries: Generative Cellular Automata}

Continuous Generative Cellular Automata (cGCA) extends the idea of Generative Cellular Automata (GCA) by \cite{zhang2021gca} but generates continuous surface with implicit representation instead of discrete voxel grid.
For the completeness of discussion, we briefly review the formulation of GCA.

Starting from an incomplete voxelized shape, GCA progressively updates the local neighborhood of current occupied voxels to eventually generate a complete shape.
In GCA, a shape is represented as a state $s=\{(c, o_c)|c \in \mathbb{Z}^3, o_c \in\{0, 1\} \}$, a set of binary occupancy $o_c$ for every cell $c \in \mathbb{Z}^3$, where the occupancy grid stores only the sparse cells on the surface.
Given a state of an incomplete shape $s^0$, GCA evolves to the state of a complete shape $s^T$ by sampling $s^{1:T}$ from the Markov chain
\begin{equation} \label{eq:transition_kernel}
    \quad s^{t + 1} \sim p_\theta(\cdot \mid s^t),
\end{equation}
where $T$ is a fixed number of transitions and $p_\theta$ is a homogeneous transition kernel parameterized by neural network parameters $\theta$. %
The transition kernel $p_\theta$ is implemented with sparse CNN (\cite{graham2018sparseconv, choy20194d}), which is a highly efficient neural network architecture that computes the convolution operation only on the occupied voxels.

The progressive generation of GCA confines the search space of each transition kernel at the immediate neighborhood of the current state.
The occupancy probability within the neighborhood is regressed following Bernoulli distribution, and then the subsequent state is independently sampled for individual cells.
With the restricted domain for probability estimation, the model is scalable  to high resolution 3D voxel space.
GCA shows that the series of local growth near sparse occupied cells can eventually complete the shape as a unified structure since the shapes are connected.
While GCA is a scalable solution for generating diverse shapes, the grid representation for the 3D geometry inherently limits the resolution of the final shape.

\section{Continuous Generative Cellular Automata}
\begin{figure}[t]
	\includegraphics[width=\textwidth]{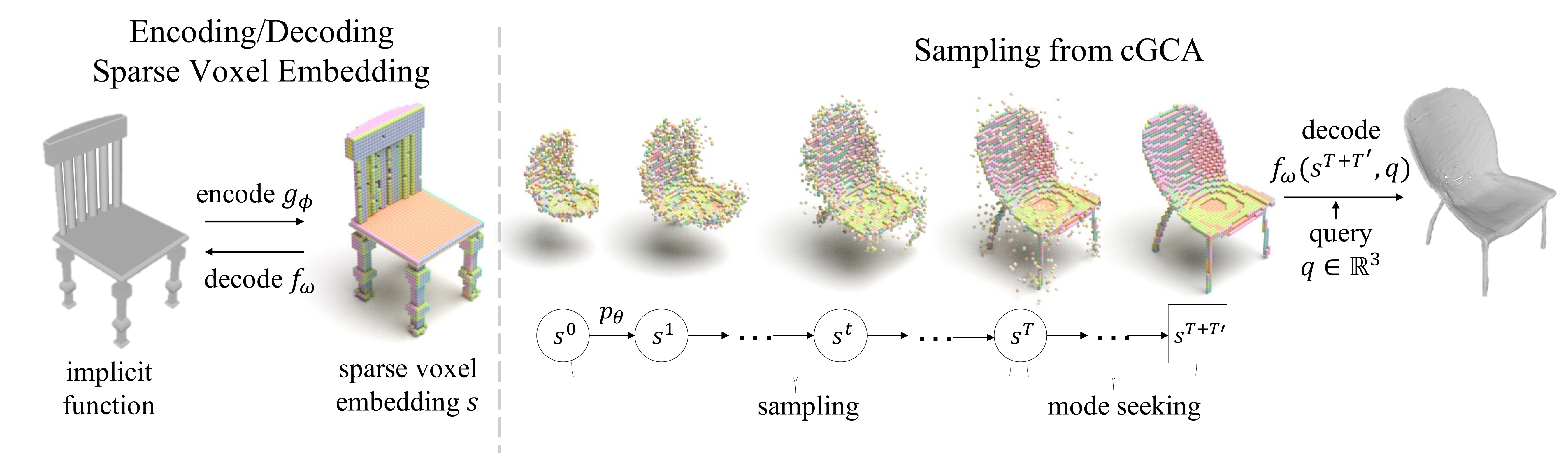}
	\centering
    \vspace{-2em}
	\caption[] {
	    Overview of our method.
	    The implicit function of continuous shape can be encoded as sparse voxel embedding $s$ and decoded back (left).
	    The colors in the sparse voxel embedding represent the clustered labels of latent code $z_c$ for each cell $c$. %
	    The sampling procedure of cGCA (right) involves $T$ steps of sampling the stochastic transition kernel $p_\theta$, followed by $T'$ mode seeking steps which remove cells with low probability.
	    From the final sparse voxel embedding $s^{T + T'}$, the decoder can recover the implicit representation for the complete continuous shape.
	    }
	\label{fig:method_overview}
\end{figure}
\vspace{-1em}

In Sec.~\ref{sec:embedding}, we formally introduce an extension of sparse occupancy voxels to represent continuous geometry named sparse voxel embedding, where each occupied voxel contains latent code representing local implicit fields.
We train an autoencoder that can compress the implicit fields into the embeddings and vice versa.
Then we present the sampling procedure of cGCA that generates 3D shape in Sec.~\ref{sec:sampling}, which is the inference step for shape completion.
Sec.~\ref{sec:training} shows the training objective of cGCA, which approximately maximizes the variational lower bound for the distribution of the complete continuous geometry.

\vspace{1.0em}

\subsection{Sparse Voxel Embedding}
\label{sec:embedding}

In addition to the sparse occupied voxels of GCA, the state of cGCA, named sparse voxel embedding, contains the associated latent code, which can be decoded into continuous surface.
Formally, the state $s$ of cGCA is defined as a set of pair of binary occupancy $o_c$ and the latent code $z_c$, for the cells $c$ in a three-dimensional grid $\mathbb{Z}^3$
\begin{equation}
    s = \{(c, o_c, z_c) | c \in \mathbb{Z}^3, o_c \in \{0, 1\}, z_c \in \mathbb{R}^K \}.
    \label{eq:state_cgca}
\end{equation}
Similar to GCA, cGCA maintains the representation sparse by storing only the set of occupied voxels and their latent codes, and sets $z_c=0$ if $o_c=0$.

The sparse voxel embedding $s$ can be converted to and from the implicit representation of local geometry with neural networks, inspired by the work of \cite{peng2020conv_onet}, \cite{chabra2020deep_local_shapes}, and \cite{chibane20ifnet}.
We utilize the (signed) distance to the surface as the implicit representation, and use autoencoder for the conversion.
The encoder $g_\phi$ produces the sparse voxel embedding $s$ from the coordinate-distance pairs  $P = \{(p, d_p) | p \in \mathbb{R}^3, d_p \in \mathbb{R} \}$, where $p$ is a 3D coordinate and $d_p$ is the distance to the surface, $s = g_\phi(P)$.
The decoder $f_\omega$, on the other hand, regresses the local implicit field value $d_q$ at the 3D position $q \in \mathbb{R}^3$ given the sparse voxel embedding $s$, $\hat{d}_q=f_\omega(s, q)$ for given coordinate-distance pairs $Q = \{(q, d_q)| q\in \mathbb{R}^3, d_q \in \mathbb{R}\}$.
The detailed architecture of the autoencoder is described in  Appendix~\ref{app:sec:implementation_details}, where the decoder $f_\omega$ generates continuous geometry by interpolating hierarchical features extracted from sparse voxel embedding.
An example of the conversion is presented on the left side of Fig.~\ref{fig:method_overview}, where the color of the sparse voxel embedding represents clustered labels of the latent codes with k-means clustering~(\cite{hartigan1979kmeans}).
Note that the embedding of a similar local geometry, such as the seat of the chair, exhibits similar values of latent codes.

The parameters $\phi, \omega$ of the autoencoder  are jointly optimized by minimizing the following loss function:
\begin{equation}
\label{eq:autoencoder_loss}
    \mathcal{L}(\phi, \omega) = \frac{1}{|Q|} \sum_{(q, d_q) \in Q}{|f_\omega(s, q) - \max\left(\min\left(\frac{d_q}{\epsilon}, 1\right), -1\right)|} + \beta \frac{1}{|s|}\sum_{c \in s}{\|z_c\|},
\end{equation}
where $s=g_\phi(P)$ and $\epsilon$ is the size of a single voxel.
The first term in Eq.~(\ref{eq:autoencoder_loss}) corresponds to minimizing the normalized distance and the second is the regularization term for the latent code weighted by hyperparameter $\beta$.
Clamping the maximum distance makes the network focus on  predicting accurate values at the vicinity of the surface (\cite{park2019deepsdf, chibane2020ndf}).

\subsection{Sampling from continuous Generative Cellular Automata}
\label{sec:sampling}

The generation process of cGCA echos the formulation of GCA (\cite{zhang2021gca}), and repeats $T$ steps of sampling from the transition kernel %
to progressively grow the shape.
Each transition kernel $p(s^{t + 1} | s^{t})$ is factorized into cells within the local neighborhood of the occupied cells of the current state, $\mathcal{N}(s^t)=\{ c' \in \mathbb{Z}^3 \mid  d(c, c') \leq r, c\in s^t \}$ \footnote{We use the notation $c \in s$ if $o_c = 1$ for $c\in \mathbb{Z}^3$ to denote occupied cells.}  given a distance metric $d$ and the radius $r$:
\begin{equation}
    p(s^{t + 1} | s^{t}) 
    = \prod_{c \in \mathcal{N}(s^t)} p_\theta(o_c, z_c| s^t) \label{eq:transition_kernel} \\
    = \prod_{c \in \mathcal{N}(s^t)} p_\theta(o_c|s^t) p_\theta(z_c|s^t, o_c).
\end{equation}
Note that the distribution is further decomposed into the occupancy $o_c$ and the latent code $z_c$, %
where we denote $o_c$ and $z_c$ as the random variable of occupancy and latent code for cell $c$ in state $s^{t + 1}$.
Therefore the shape is generated by progressively sampling the occupancy and the latent codes for the occupied voxels which are decoded and fused into a continuous geometry.
The binary occupancy is represented with the Bernoulli distribution
\begin{equation}
    p_\theta(o_c|s^t) = Ber(\lambda_{\theta, c}), 
\end{equation}
where $\lambda_{\theta, c} \in [0, 1]$ is the estimated occupancy probability at the corresponding cell $c$.
With our sparse representation, the distribution of the latent codes is
\begin{equation}
    p_\theta(z_c|s^t, o_c) = 
    \begin{cases}
    \delta_0 &\text{if } o_c = 0\\
    N(\mu_{\theta, c}, \sigma^t \mI) &\text{if } o_c = 1.
    \end{cases}
\end{equation}
$\delta_0$ is a Dirac delta distribution at $0$ indicating that $z_c = 0$ when $o_c=0$.
For the occupied voxels ($o_c=1$), $z_c$ follows the normal distribution with the estimated mean of the latent code $\mu_{\theta, c} \in \mathbb{R}^K$ and the predefined standard deviation $\sigma^t\mI$, where $\sigma^t$ decreases with respect to $t$.

\paragraph{Initial State.}
Given an incomplete point cloud, we set the initial state $s^0$ of the sampling chain by setting the occupancy $o_c$ to be 1 for the cells that contain point cloud and associating the occupied cells with a latent code sampled from the isotropic normal distribution.
However, the input can better describe the provided partial geometry if we encode the latent code $z_c$ of the occupied cells with the encoder $g_\phi$.
The final completion is more precise when all the transitions $p_\theta$ are conditioned with the initial state containing the encoded latent code.
Further details are described in Appendix~\ref{app:sec:implementation_details}.

\paragraph{Mode Seeking.}
While we effectively model the probabilistic distribution of multi-modal shapes, the final reconstruction needs to converge to a single coherent shape.
Na\"ive sampling of the stochastic transition kernel in Eq.~(\ref{eq:transition_kernel}) can include noisy voxels with low-occupancy probability.
As a simple trick, we augment \textit{mode seeking steps} that determine the most probable mode of the current result instead of probabilistic sampling.
Specifically, we run additional $T'$ steps of the transition kernel but we select the cells with probability higher than 0.5 and set the latent code as the mean  of the distribution $\mu_{\theta, c}$.
The mode seeking steps ensure that the final shape discovers the dominant mode that is closest to $s^T$ as depicted in Fig.~\ref{fig:method_overview}, where it can be transformed into implicit function with the pretrained decoder $f_w$.

\subsection{Training Continuous Generative Cellular Automata}\label{sec:training}

We train a homogeneous transition kernel $p_\theta(s^{t + 1} | s^t)$, whose repetitive applications eventually yield the samples that follow the learned distribution.
However, the data contains only the initial $s^0$ and the ground truth state $x$, and we need to emulate the sequence for training.
We adapt infusion training (\cite{bordes2017infusion}), which induces the intermediate transitions to converge to the desired complete state.
To this end, we define a function $G_x(s)$ that finds the valid cells that are closest to the complete shape $x$ within the neighborhood of the current state $\mathcal{N}(s)$:
\begin{equation}
G_{x}(s) = \{\mathrm{argmin}_{c \in \mathcal{N}(s)}  d(c, c') \mid c' \in x\}.
\end{equation}
Then, we define the infusion kernel $q^t$ factorized similarly as the sampling kernel in Eq.~(\ref{eq:transition_kernel}):
\begin{equation}
q_\theta^t(s^{t + 1} | s^{t}, x) 
= \prod_{c \in \mathcal{N}(s^t)} q^t_\theta(o_c, z_c| s^t, x)
= \prod_{c \in \mathcal{N}(s^t)} q^t_\theta(o_c|s^t, x) q^t_\theta(z_c|s^t, o_c, x). \label{eq:infusion_kernel}
\end{equation}
The distributions for both $o_c$ and $z_c$ are gradually biased towards the ground truth final shape $x$ with the infusion rate $\alpha^t$, which increases linearly with respect to time step, i.e., $\alpha^t = \max(\alpha_1t + \alpha_0, 1)$, with $\alpha_1 > 0$:
\begin{equation}
q^t_\theta(o_c|s^t, x) = Ber( (1 - \alpha^t) \lambda_{\theta, c} + \alpha^t \mathds{1}[c \in G_{x}(s^t) ]),
\end{equation}
\begin{equation}
q^t_\theta(z_c|s^t, o_c, x) = 
\begin{cases}
\delta_{0} &\text{if } o_c = 0 \\
{N}((1 - \alpha^t) \mu_{\theta, c} + \alpha^t z^x_c, ~\sigma^t \mI) &\text{if } o_c = 1.
\end{cases}
\end{equation}
Here  $\mathds{1}$ is an indicator function, and we will denote $o^x_c, z^x_c$ as the occupancy and latent code of the ground truth complete shape $x$ at coordinate $c$.

cGCA aims to optimize the log-likelihood of the ground truth sparse voxel embedding $\log p_\theta (x)$.
However, since the direct optimization of the exact log-likelihood is intractable, we modify the variational lower bound using the derivation of diffusion-based models (\cite{sohl2015nonequil}):
\begin{align}
\label{eq:elbo}
\log p_\theta(x) 
&\ge \sum_{s^{0: T-1}} q_\theta(s^{0:T-1}|x) \log \frac{p_\theta(s^{0:T-1}, x)}{q_\theta(s^{0:T-1}|x)} \\
&= 
\underbrace{\log \frac{p(s^0)}{q(s^0)}}_{\mathcal{L}_{\mathrm{init}}}  
+ \sum_{0 \le t < T - 1} \underbrace{-D_{KL}(q_\theta(s^{t + 1} | s^t, x) \| p_\theta(s^{t + 1} | s^t))}_{\mathcal{L}_t}  + \underbrace{\E_{q_\theta}[\log p_\theta(x | s^{T - 1})]}_{\mathcal{L}_{\mathrm{final}}} , \notag
\end{align}
where the full derivation is in Appendix~\ref{app:elbo_derivation}.
We now analyze $\mathcal{L}_{\mathrm{init}}, \mathcal{L}_t, \mathcal{L}_{\mathrm{final}}$ separately.
We ignore the term $\mathcal{L}_{\mathrm{init}}$ during optimization since it contains no trainable parameters. 
$\mathcal{L}_t$ for $0 \le t < T - 1$ can be decomposed as the following : 
\begin{equation}
\label{eq:transition_loss}
\begin{split}
\mathcal{L}_t = 
-\sum_{c \in \mathcal{N} (s^t)}  &
\underbrace{D_{KL} (q_\theta (o_c | s^t, x) \| p_\theta (o_c | s^t) )}_{\mathcal{L}_{o}}   \\
+ & q_\theta (o_c = 1 | s^t, x)  \underbrace{D_{KL} (q_\theta (z_c| s^t, x, o_c=1) \| p_\theta (z_c| s^t, o_c=1))}_{\mathcal{L}_{z}},
\end{split}
\end{equation}
where the full derivation is in Appendix~\ref{app:kl_decomposition}.
Since $\mathcal{L}_{o}$ and $\mathcal{L}_{z}$ are the KL divergence between Bernoulli and normal distributions, respectively, $L_t$ can be written in a closed-form.
In practice, the scale of  $\mathcal{L}_{z}$ can be much larger than that of $\mathcal{L}_{o}$.
This results in local minima  in the gradient-based optimization and reduces the occupancy probability $q_\theta (o_c= 1 | s^t, x) $ for every cell.
So we balance the two losses by multiplying a hyperparameter $\gamma$ at  $\mathcal{L}_{z}$, which is fixed as $\gamma= 0.01$ for all experiments.
 
The last term $\mathcal{L}_{\mathrm{final}}$ can be written as following: 
\begin{equation}
\mathcal{L}_{\mathrm{final}}= \sum_{c \in \mathcal{N} (s^{T-1})} \log \{
(1 - \lambda_{\theta, c} ) \mathds{1}[o_c^x=0] \delta_{0}(z_c^x)   + \lambda_{\theta, c} \mathds{1}[o_c^x=1]  {N}(z^x_c; \; \mu_{\theta, c}, \sigma^{T-1} I)
\}. 
\end{equation}
A problem rises when computing $\mathcal{L}_{\mathrm{final}}$, since the usage of Dirac distribution makes $\mathcal{L}_{\mathrm{final}} \rightarrow \infty$ if $o_c^x = 0$, which does not produce a valid gradient for optimization.
However, we can replace the loss $\mathcal{L}_{\mathrm{final}}$ with a well-behaved loss $\mathcal{L}_t$ for $t = T - 1$ , by using the following proposition: 
\begin{restatable}{prop}{auxloss}
	By replacing $\delta_{0}(z_c^x)$ with the indicator function $\mathds{1}[z_c^x=0]$ when computing $\mathcal{L}_\mathrm{final}$, $\nabla \mathcal{L}_{T - 1} = \nabla \mathcal{L}_{\mathrm{final}}$, for $T \gg 1$
\end{restatable}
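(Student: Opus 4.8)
The plan is to reduce both $\mathcal{L}_{T-1}$ and the regularized $\mathcal{L}_{\mathrm{final}}$ (with $\delta_0(z_c^x)$ replaced by $\mathds{1}[z_c^x=0]$) to explicit closed forms in the regime $T\gg 1$, and then to match them cell-by-cell over the common index set $\mathcal{N}(s^{T-1})$. The outcome I am aiming for is that the two expressions agree up to a single additive term that does not depend on $\theta$, so that the loss values differ but their gradients coincide. I would lean on two consequences of taking $T$ large: first, the infusion rate has saturated, $\alpha^{T-1}=1$, so the infusion kernel $q^{T-1}_\theta$ collapses onto the ground truth; second, the infused chain has grown into the immediate neighborhood of every cell of $x$, so that $G_x(s^{T-1})$ recovers exactly the occupied set $\{c : o_c^x=1\}$ and $\mathcal{N}(s^{T-1})$ covers $x$.

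First I would substitute $\alpha^{T-1}=1$ into the infusion kernel. The occupancy target becomes a point mass, $q^{T-1}_\theta(o_c=1\mid s^{T-1},x)=\mathds{1}[c\in G_x(s^{T-1})]=\mathds{1}[o_c^x=1]$, and the latent target becomes $N(z_c^x,\sigma^{T-1}\mI)$. Next I would evaluate the two KL divergences making up $\mathcal{L}_{T-1}$ in closed form. The Bernoulli term against a deterministic target gives $D_{KL}(q_\theta(o_c\mid s^{T-1},x)\,\|\,p_\theta(o_c\mid s^{T-1}))=-\mathds{1}[o_c^x=1]\log\lambda_{\theta,c}-\mathds{1}[o_c^x=0]\log(1-\lambda_{\theta,c})$, and the latent term, being a KL between two normals with identical covariance $\sigma^{T-1}\mI$, collapses to $\tfrac{1}{2\sigma^{T-1}}\|z_c^x-\mu_{\theta,c}\|^2$, weighted by the coefficient $q_\theta(o_c=1\mid s^{T-1},x)=\mathds{1}[o_c^x=1]$.

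I would then expand $\mathcal{L}_{\mathrm{final}}$ under the indicator replacement. Using the convention $o_c^x=0\Leftrightarrow z_c^x=0$, the $o_c^x=0$ branch contributes $\log(1-\lambda_{\theta,c})$ (the Gaussian term is annihilated by $\mathds{1}[o_c^x=1]$), while the $o_c^x=1$ branch contributes $\log\lambda_{\theta,c}+\log N(z_c^x;\mu_{\theta,c},\sigma^{T-1}\mI)$. Comparing with $\mathcal{L}_{T-1}=\sum_{c}\bigl(-\mathcal{L}_o-q_\theta(o_c=1\mid\cdot)\mathcal{L}_z\bigr)$, the occupancy contributions are already identical, and the Gaussian log-density $\log N(z_c^x;\mu_{\theta,c},\sigma^{T-1}\mI)$ differs from $-\mathds{1}[o_c^x=1]\tfrac{1}{2\sigma^{T-1}}\|z_c^x-\mu_{\theta,c}\|^2$ only by the normalization constant $-\tfrac{K}{2}\log(2\pi\sigma^{T-1})$, which is independent of $\theta$. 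Since this constant vanishes under $\nabla$, I conclude $\nabla\mathcal{L}_{T-1}=\nabla\mathcal{L}_{\mathrm{final}}$.

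I expect the main obstacle to be the geometric claim that $G_x(s^{T-1})$ equals the ground-truth occupied set and that $\mathcal{N}(s^{T-1})$ covers $x$; this is precisely what ``$T\gg 1$'' encodes, and it requires arguing that the infused chain reaches the immediate neighborhood of every cell of $x$ so that the $\mathrm{argmin}$ defining $G_x$ selects the ground-truth cells themselves. The remaining care is bookkeeping: I must note that the indicator replacement is exactly what keeps the $o_c^x=0$ branch finite (yielding $\log(1-\lambda_{\theta,c})$) where the Dirac $\delta_0$ would send $\mathcal{L}_{\mathrm{final}}\to\infty$, and I must track the $\theta$-free constant carefully so that only the gradients, and not the loss values, are claimed to coincide.
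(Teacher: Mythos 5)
Your proposal is correct and follows essentially the same route as the paper's proof: saturation of the infusion rate ($\alpha^{T-1}=1$) collapses the infusion kernel onto the ground truth, the Bernoulli and Gaussian KL terms are evaluated in closed form, and the two losses are then seen to differ only by the $\theta$-free Gaussian normalization constant $-\tfrac{K}{2}\log(2\pi\sigma^{T-1})$, so their gradients coincide while the values do not. The geometric point you flag as the main obstacle---that $G_x(s^{T-1})$ recovers exactly the occupied cells of $x$ for $T\gg 1$---is handled in the paper by the same brief argument you sketch (each infusion step strictly decreases the distance from the infused cells to every cell of $x$, so $G_x$ eventually selects the ground-truth cells themselves), so no different machinery is needed.
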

\begin{proof}
	\vspace{-0.7em}
	The proof is found in Appendix~\ref{app:final_state_loss}.
	\vspace{-0.7em}
\end{proof}

The proposition above serves as a justification for approximating $\nabla \mathcal{L}_{\mathrm{final}}$ as $ \nabla \mathcal{L}_{T - 1}$, with the benefits of having a simpler training procedure and easier implementation.
Replacing $\delta_{0}(z_c^T)$ with  $\mathds{1}[z_c^x=0]$ can be regarded as a reweighting technique that naturally avoids divergence in the lower bound, since both functions output a non-zero value only at $0$.
Further discussions about the replacement are in  Appendix~\ref{app:final_state_loss}.

In conclusion, the training procedure is outlined as follows:
\begin{enumerate}
	\item Sample  $s^{0: T}$ by $s^0 \sim q^0, \; s^{t + 1} \sim q_\theta^t(\cdot | s^t, x)$.
	\item For $t < T$, update $\theta$ with $\theta \leftarrow \theta + \eta \nabla_\theta \mathcal{L}_t$, where $\eta$ is the learning rate.
\end{enumerate}

Note that the original infusion training  (\cite{bordes2017infusion}) also attempts to minimize the variational lower bound, employing the Monte Carlo approximation with reparameterization trick (\cite{kingma2014vae}) to compute the gradients.
However, our objective avoids the approximations and can compute the exact lower bound for a single training step.
The proposed simplification can be applied to infusion training with any data structure including images.
We also summarize the difference of our formulation compared to GCA in Appendix~\ref{app:gca_difference}.

\section{Experiments}
\label{sec:experiments}

We test the probabilistic shape completion of cGCA for scenes (Section~\ref{sec:scene_completion}) and single object (Section~\ref{sec:single_object_completion}).
For all the experiments, we use latent code dimension $K=32$, trained with regularization parameter $\beta = 0.001$.
All the results reported are re-trained for each dataset.
Further implementation details are in the Appendix~\ref{app:implementation_details} and effects of mode seeking steps are discussed in Appendix~\ref{app:ablation_mode_seeking}.

\subsection{Scene Completion}
\label{sec:scene_completion}
In this section, we evaluate our method on two datasets: ShapeNet scene~(\cite{peng2020conv_onet}) and 3DFront~(\cite{fu20203dfront}) dataset.
The input point cloud are created by iteratively removing points within a fixed distance from a random surface point.
We control the minimum preserved ratio of the original complete surface points for each object in the scene to test varying levels of completeness.
The levels of completeness tested are 0.2, 0.5, and 0.8 with each dataset trained/tested separately.
We evaluate the quality and diversity of the completion results by measuring the Chamfer-L1 distance (CD), total mutual distance (TMD), respectively, as in the previous methods~(\cite{peng2020conv_onet, zhang2021gca}).
For probabilistic methods, five completion results are generated and we report the minimum and average of Chamfer-L1 distance (min. CD, avg. CD).
Note that if the input is severely incomplete, there exist various modes of completion that might be feasible but deviate from its ground truth geometry. 
Nonetheless, we still compare CD assuming that plausible reconstructions are likely to be similar to the ground truth.

\newcommand\containUHD{0}

\begin{table}[t]
    \caption{
		Quantitative comparison of probabilistic scene completion in ShapeNet scene dataset with different levels of completeness.
		The best results are marked as bold. Both CD (quality, $\downarrow$) and TMD (diversity, $\uparrow$) in tables are multiplied by $10^4$.
	}
	\label{table:result_synthetic}
	\centering
	\resizebox{0.9\textwidth}{!}{
	\begin{tabular}{l|ccc|ccc|ccc}
		\toprule
		 {} & \multicolumn{3}{c|}{min. rate 0.2} & \multicolumn{3}{c|}{min. rate 0.5} & \multicolumn{3}{c}{min. rate 0.8} \\

        {Method} & min. CD & avg. CD & TMD  
        & min. CD & avg. CD & TMD
        & min. CD & avg. CD & TMD  \\
        \midrule
        
        ConvOcc %
            & 3.60 & - & -& 
            1.33 & - & -&
            0.74 & - & - \\
                                
        IFNet %
            & 12.94 & - & -& 
            8.55 & - & -& 
            7.49 & - & - \\
            
        GCA %
            & 4.97 & 6.32 & \textbf{11.56} & 
            3.02 & 3.54 & \textbf{4.92}&
            2.50 & 2.64 & 2.76 \\
		 
        \midrule
        cGCA
            & 2.80 & 3.88 & 10.07 &
            1.16 & 1.49 & 3.91 &
            0.69 & 0.87 & \textbf{3.05}\\
            
        cGCA (w/ cond.)
            & \textbf{1.75} & \textbf{2.33} & 5.38 &
            \textbf{0.87} & \textbf{1.08} & 2.96 &
            \textbf{0.57} & \textbf{0.64} & 2.34 \\
		\bottomrule
	\end{tabular}
	}
\end{table}

\begin{figure}[t]
	\vspace{-1em}
	\includegraphics[width=\textwidth]{./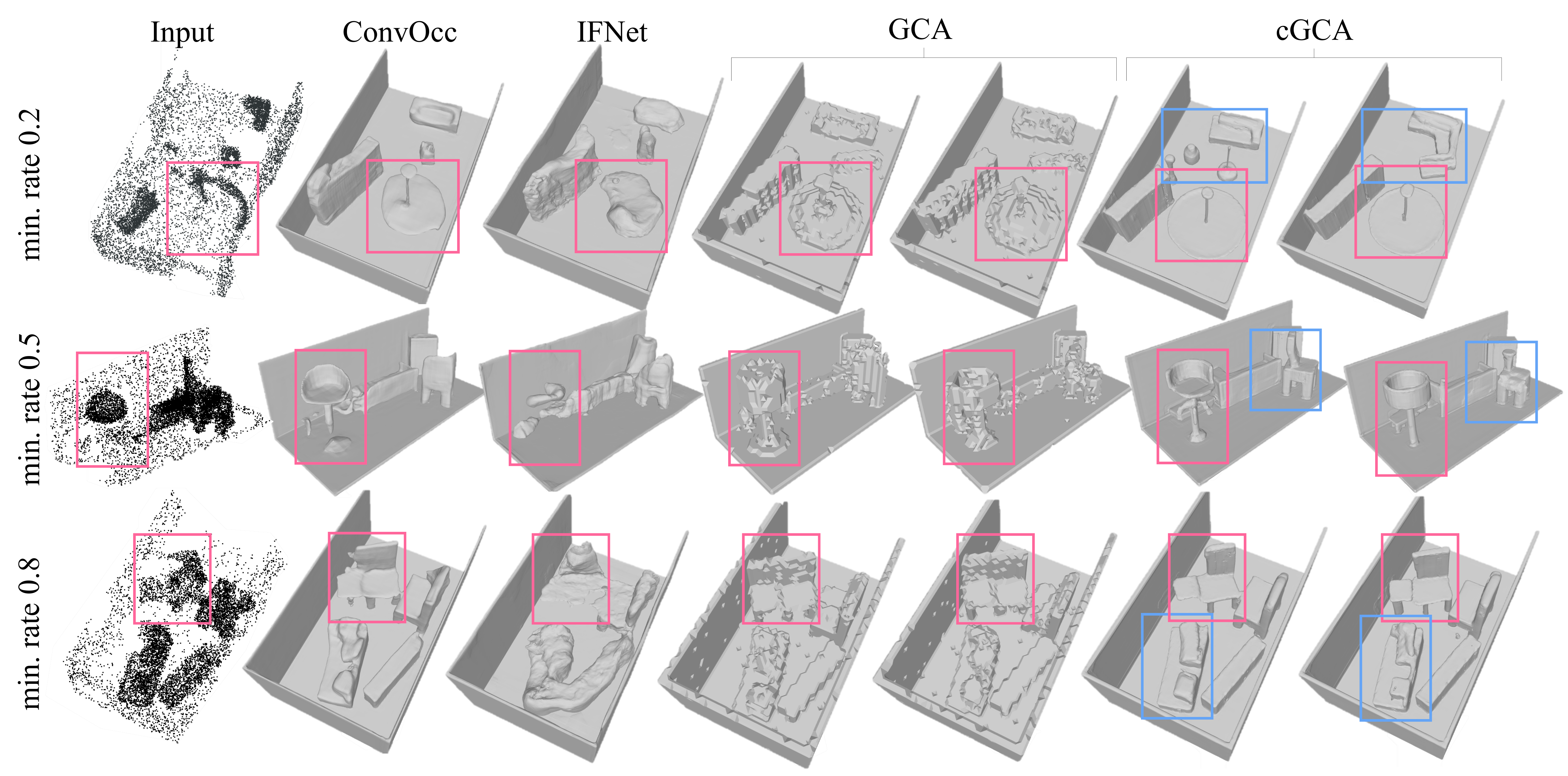}
	\vspace{-2em}
	\centering
	\caption[] {
		Qualitative comparison on ShapeNet scene dataset.
		Best viewed on screen.
		Minimum rate indicates the guaranteed rate of surface points for each object in the scene.
		While deterministic methods (ConvOcc, IFNet) produce blurry surfaces since they cannot model multi-modal distribution, probabilistic methods (GCA, cGCA) generate multiple plausible scenes.
		cGCA is the only method that can generate multiple plausible scenes without losing the details for each object.
	}
	\label{fig:synthetic_comparison}
	\vspace{-1em}
\end{figure}

\textbf{ShapeNet Scene.}
ShapeNet scene contains synthetic rooms that contain multiple ShapeNet~(\cite{chang2015shapenet}) objects, which have been randomly scaled with randomly scaled floor and random walls.
We compare the performance of cGCA with two  deterministic scene completion models that utilize occupancy as the implicit representation: ConvOcc~(\cite{peng2020conv_onet}) and IFNet~(\cite{chibane20ifnet}).
We additionally test the quality of our completion against a probabilistic method of GCA~(\cite{zhang2021gca}).
Both GCA and cGCA use $64^3$ voxel resolution with $T=15$ transitions.
cGCA (w/ cond.) indicates a variant of our model, where each transition is conditioned on the initial state $s^0$ obtained by the encoder $g_\phi$ as discussed in Sec.~\ref{sec:sampling}.

Table \ref{table:result_synthetic} contains the quantitative comparison on ShapeNet scene.
Both versions of cGCA outperform all other methods on min. CD for all level of completeness. 
The performance gap is especially prominent for highly incomplete input, which can be visually verified from Fig.~\ref{fig:synthetic_comparison}.
The deterministic models generate blurry objects given high uncertainty, while our method consistently  generates detailed reconstructions for inputs with different levels of completeness.
Our result coincides with the well-known phenomena of generative models, where the deterministic models fail to generate crisp outputs of multi-modal distribution~(\cite{goodfellow2017gan_tutorial}).
Considering practical scenarios with irregular real-world scans, our probabilistic formulation is crucial for accurate 3D scene completion.
When conditioned with the initial state, the completion results of cGCA stay faithful to the input data, achieving lower CDs.
Further analysis on varying completeness of input is discussed in Appendix~\ref{app:ablation_completeness}.

\begin{table}[t]
    \caption{
		Quantitative comparison of probabilistic scene completion in 3DFront.
		The best results are marked as bold. Note that CD (quality, $\downarrow$) and TMD (diversity, $\uparrow$) in tables are multiplied by $10^3$.
	}
	\label{table:result_3dfront}
	\centering
	\resizebox{0.9\textwidth}{!}{
	\begin{tabular}{l|ccc|ccc|ccc}
		\toprule
		 {} & \multicolumn{3}{c|}{min. rate 0.2} & \multicolumn{3}{c|}{min. rate 0.5} & \multicolumn{3}{c}{min. rate 0.8} \\
		 
        {Method} & min. CD & avg. CD & TMD  
        & min. CD & avg. CD & TMD
        & min. CD & avg. CD & TMD \\
        \midrule
        
        GCA %
            & 4.89 & 6.59 & \textbf{16.90} & 
            3.11 & 3.53 & \textbf{8.95} & 
            2.53 & 2.82 & \textbf{7.98} \\
		 
        \midrule
        cGCA 
            & 4.07 & 5.42 & 16.20 & 
            \textbf{2.12} & 2.57 & 7.82 &
            1.64 & 2.19 & 5.97  \\
            
        cGCA (w/ cond.)
            & \textbf{3.53} & \textbf{4.26} & 9.23 &
            2.19 & \textbf{2.54} & 6.89 & 
            \textbf{1.47} & \textbf{1.69} & 5.35 \\
		\bottomrule
	\end{tabular}
	}
\end{table}
\begin{figure}[t]
\vspace{-1em}
	\centering
	\includegraphics[width=0.97\textwidth]{./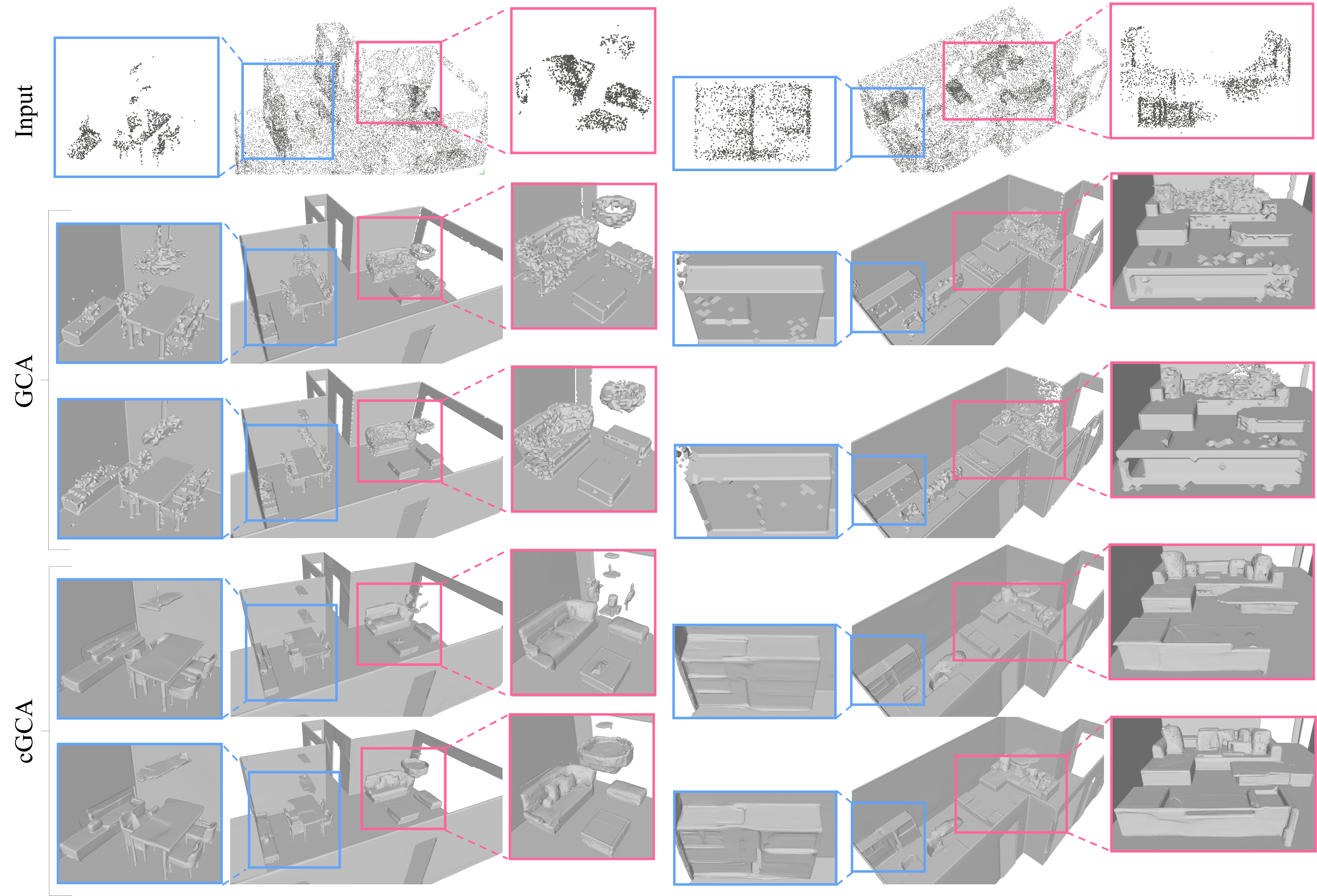}
	\vspace{-1em}
	
	\caption[] {
		Qualitative comparison on 3DFront dataset with 0.5 object minimum rate.
		Best viewed on screen.
		Since the raw inputs of furniture are highly incomplete, there exist multiple plausible reconstructions.
		Probabilistic approaches produce diverse yet detailed scene geometry.
		GCA suffers from artifacts due to the discrete voxel resolution.
	}
	\label{fig:3dfront_comparison}
	\vspace{-1em}
\end{figure}

\textbf{3DFront.}
3DFront is a large-scale indoor synthetic scene dataset with professionally designed layouts and contains high-quality 3D objects, in contrast to random object placement for ShapeNet scene.
3DFront dataset represents the realistic scenario where the objects are composed of multiple meshes without clear boundaries for inside or outside.
Unless the input is carefully processed to be converted into a watertight mesh, the set-up excludes many of the common choices for implicit representation, such as occupancy or signed distance fields.
However, the formulation of cGCA can be easily adapted for different implicit representation, and we employ \textit{unsigned distance fields}  (\cite{chibane2020ndf}) to create the sparse voxel embedding for 3DFront dataset.
We compare the performance of cGCA against GCA, both with  voxel resolution of 5cm and $T=20$ transitions.

Table~\ref{table:result_3dfront} shows that cGCA outperforms GCA by a large margin in CD, generating high-fidelity completions with unsigned distance fields.
While both GCA and cGCA are capable of generating multiple plausible results, GCA suffers from discretization artifacts due to voxelized representation, as shown in Fig.~\ref{fig:3dfront_comparison}.
cGCA not only overcomes the limitation of the resolution, but also is scalable to process the entire rooms at once during both training and test time.
In contrast, previous methods for scene completion (\cite{siddiqui2021retrievalfuse, peng2020conv_onet}) divide the scene into small sections and separately complete them.
We analyze the scalability in terms of the network parameters and the GPU usage in Appendix~\ref{app:scalabilty}.
In Appendix~\ref{app:scannet}, we also provide results on ScanNet~(\cite{dai2017scannet}) dataset, which is one of the widely used datasets for real-world indoor environments.

\begin{table}[t]
    \caption{
		Quantitative comparison of single object probabilistic shape completion results on ShapeNet. The best results trained in a single class are marked as bold. Note that MMD (quality, $\downarrow$), UHD (fidelity, $\downarrow$) and TMD (diversity, $\uparrow$) in tables are multiplied by $10^3$, $10^3$, and $10^2$ respectively.
	}
	\label{table:result_shapenet}
	\centering
	\resizebox{0.9\textwidth}{!}{
	\begin{tabular}{l|cccc|cccc|cccc}
		\toprule
		 {} & \multicolumn{4}{c|}{MMD (quality)} & \multicolumn{4}{c|}{UHD (fidelity)} & \multicolumn{4}{c}{TMD (diversity)} \\

        Method & Sofa & Chair & Table & Avg.
        & Sofa & Chair & Table & Avg.
        & Sofa & Chair & Table & Avg. \\
        \midrule
        
        cGAN %
            & 5.70 & 6.53 & 6.10 & 6.11 &
            11.40 & 12.10 & 11.20 & 11.57 &
            7.44 & 8.21 & 6.88 & 7.51 \\

        GCA ($64^3$)%
            & 4.70 & 6.23 & 6.22 & 5.72 &
            8.88 & 7.95 & 7.63 & 8.15 &
            \textbf{22.39} & 12.41 & 18.83 & 17.88 \\
            
        \midrule
        cGCA ($32^3$)
            & 4.59 & \textbf{6.11} & 6.08 & 5.59 & 
            10.43 & 9.99 & 9.29 & 9.90 &
            9.72 & 13.65 & 26.04 & 16.47\\
            
        cGCA ($64^3$)
            & \textbf{4.51} & 6.30 & \textbf{5.89} & 5.57 & 
            8.99 & 8.43 & 7.22 & 8.21 &
            11.18 & \textbf{16.70} & \textbf{31.94} & \textbf{19.94}\\
            
        cGCA ($64^3$, w/ cond.)
            & 4.64 & 6.15 & 5.90 & \textbf{5.56} &
            \textbf{8.00} & \textbf{6.75} & \textbf{6.45} & \textbf{7.07} &
            14.01 & 11.49 & 31.01 & 18.84\\
		\bottomrule
	\end{tabular}
	}
\end{table}

\begin{figure}[t]
	\vspace{-1em}
	\includegraphics[width=0.9\textwidth]{./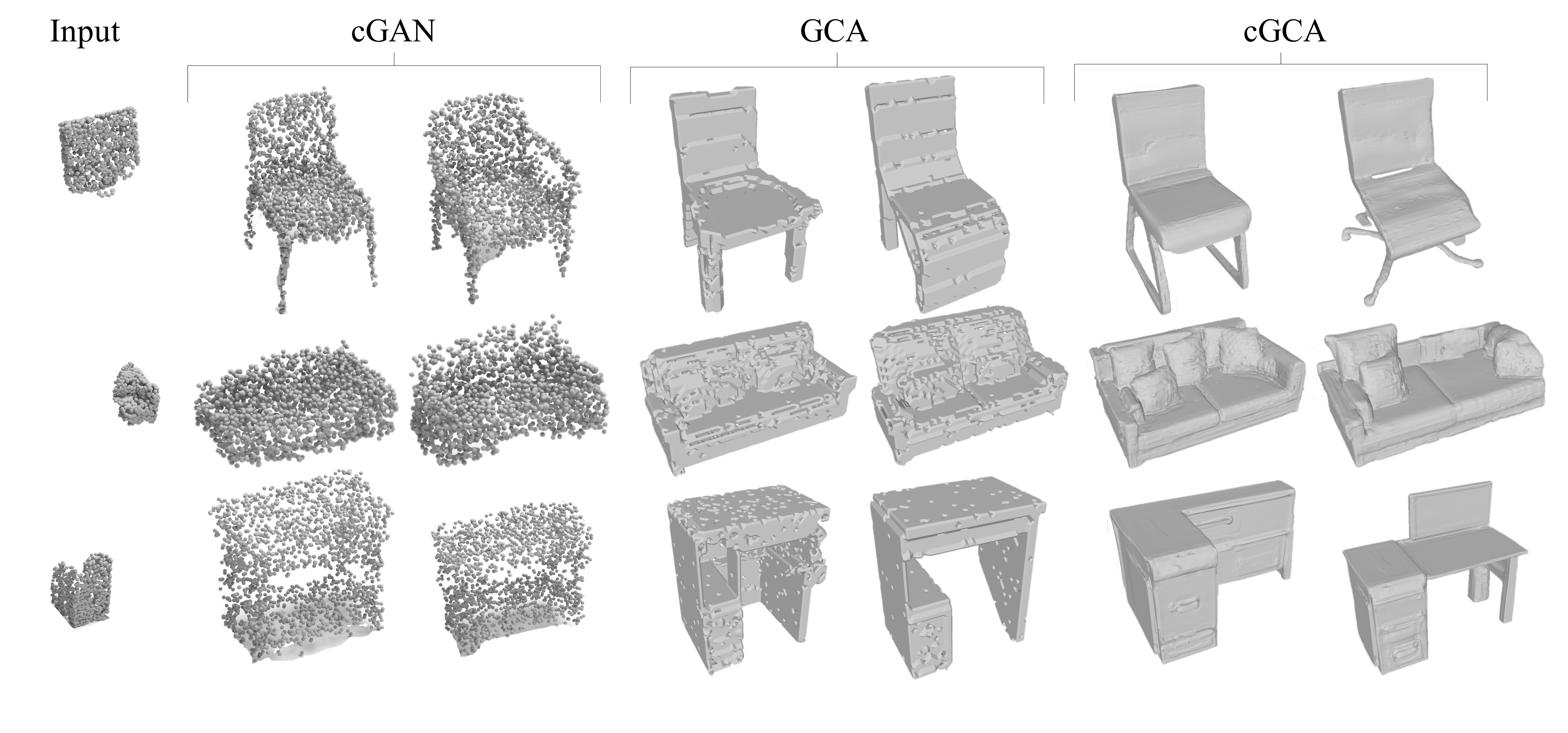}
	\vspace{-2em}
	\centering
	\caption[] {
		Qualitative comparison on probabilistic shape completion of a single object.
		cGCA is the only method that can produce a continuous surface.
	}
	\label{fig:single_object_comparison}
    \vspace{-1em}
\end{figure}

\subsection{Single Object Completion}
\label{sec:single_object_completion}

We analyze various performance metrics of cGCA for a single object completion with chair/sofa/table classes of ShapeNet~(\cite{chang2015shapenet}) dataset.
Given densely sampled points of a normalized object in $[-1, 1]^3$, the incomplete observation is generated by selecting points within the sphere of radius 0.5 centered at one of the surface points.
From the partial observation, we sample 1,024 points which serve as a sparse input, and sample 2,048 points from completion results for testing.
Following the previous method (\cite{wu2020msc}), we generate ten completions and compare MMD (quality), UHD (fidelity), and TMD (diversity).
Our method is compared against other probabilistic shape completion methods: cGAN (\cite{wu2020msc}) which is based on point cloud, and GCA (\cite{zhang2021gca}) which uses voxel representation.
We use $T=30$ transitions.

Quantitative and qualitative results are shown in Table~\ref{table:result_shapenet} and Fig.~\ref{fig:single_object_comparison}.
Our approach exceeds other baselines in all metrics, indicating that cGCA can generate high-quality completions (MMD) that are faithful to input (UHD) while being diverse (TMD).
By using latent codes, the completed continuous surface of cGCA can capture geometry beyond its voxel resolution.
The quality of completed shape in $32^3$ voxel resolution therefore even outperforms in MMD for discrete GCA in higher $64^3$ voxel resolution.
Also, the UHD score of cGCA (w/ cond.) exceeds that of GCA and vanilla cGCA indicating that conditioning latent codes from the input indeed preserves the input partial geometry.

\section{Related Works}

\paragraph{3D Shape Completion.}
The data-driven completion of 3D shapes demands a large amount of memory and computation.
The memory requirement for voxel-based methods increases cubically to the resolution~(\cite{dai2017complete}) while the counterpart of point cloud based representations~(\cite{yuan2018pcn}) roughly increases linearly with the number of points.
Extensions of scene completion in voxel space utilize hierarchical representation~(\cite{dai2018scancomplete}) or subdivided scenes~(\cite{dai2018scancomplete,dai2020sgnn})  with sparse voxel representations.
Recently, deep implicit representations~(\cite{park2019deepsdf, chen2019implicit_decoder, mescheder2019onet}) suggest a way to overcome the limitation of resolution.
Subsequent works~(\cite{chabra2020deep_local_shapes, chibane20ifnet, chiyu2020local, peng2020conv_onet}) demonstrate methods to extend the representation to large-scale scenes.
However, most works are limited to regressing a single surface from a given observation.
Only a few recent works~(\cite{wu2020msc, zhang2021gca, smith2017iwgan}) generate multiple plausible outcomes by modeling the distribution of surface conditioned on the observation.
cGCA suggests a scalable solution for multi-modal continuous shape completion by employing progressive generation with continuous shape representations.

\paragraph{Diffusion Probabilistic Models.}

One way to capture the complex data distribution by the generative model is to use a diffusion process inspired by nonequilibrium thermodynamics such as~\cite{sohl2015nonequil, ho2020denoising, luo2021diffusion}.
The diffusion process incrementally destroys the data distribution by adding noise, whereas the transition kernel learns to revert the process that restores the data structure.
The learned distribution is flexible and easy to sample from, but it is designed to evolve from a random distribution.
On the other hand, the infusion training by ~\cite{bordes2017infusion} applies a similar technique but creates a forward chain instead of reverting the diffusion process.
Since the infusion training can start from a structured input distribution, it is more suitable to a shape completion that starts from a partial data input.
However, the infusion training approximates the lower bound of variational distribution with Monte Carlo estimates using the reprameterization trick~(\cite{kingma2014vae}).
We modify the training objective and introduce a simple variant of infusion training that can maximize the variational lower bound of the log-likelihood of the data distribution without using Monte Carlo approximation.

\section{Conclusion}
\label{sec:conclusion}
We are the first to tackle the challenging task of probabilistic scene completion, which requires not only the model to generate multiple plausible outcomes but also be scalable to capture the wide-range context of multiple objects.
To this end, we propose continuous Generative Cellular Automata, a scalable generative model for completing multiple plausible continuous surfaces from an incomplete point cloud.
cGCA compresses the implicit field into sparse voxels associated with their latent code named sparse voxel embedding, and 
incrementally generates diverse implicit surfaces.
The training objective is proven to maximize the variational lower bound of the likelihood of sparse voxel embeddings, indicating that cGCA is a theoretically valid generative model.
Extensive experiments show that our model is able to faithfully generate multiple plausible surfaces from partial observation.

There are a few interesting future directions.
Our results are trained with synthetic scene datasets where the ground truth data is available.
It would be interesting to see how well the data performs in real data with self-supervised learning.
For example, we can extend our method to real scenes such as ScanNet~\cite{dai2017scannet} or Matterport 3D~(\cite{Matterport3D}) by training the infusion chain with data altered to have different levels of  completeness as suggested by ~\cite{dai2020sgnn}.
Also, our work requires two-stage training, where the transition kernel is trained with the ground truth latent codes generated from the pre-trained autoencoder.
It would be less cumbersome if the training could be done in an end-to-end fashion.
Lastly, our work takes a longer inference time compared to previous methods~(\cite{peng2020conv_onet}) since a single completion requires multiple transitions.
Reducing the number of transitions by using a technique similar to \cite{salimans2022progressive} can accelerate the runtime.

\section{Ethics Statement}
\label{sec:code_of_ethics}

The goal of our model is to generate diverse plausible scenes given an observation obtained by sensors. While recovering the detailed geometry of real scenes is crucial for many VR/AR and robotics applications, it might violate proprietary or individual privacy rights when abused. Generating the unseen part can also be regarded as creating fake information that can deceive people as real.
\section{Reproducibility}
\label{sec:reproducibility}
Code to run the experiments is available at \href{https://github.com/96lives/gca}{https://github.com/96lives/gca}.
Appendix~\ref{app:sec:implementation_details} contains the implementation details including the network architecture, hyperparameter settings, and dataset processing.
Proofs and derivations are described in Appendix~\ref{app:sec:proofs}.

\section{Acknowledgement}
We thank Youngjae Lee for helpful discussion and advice on the derivations.
This research was supported by the
National Research Foundation of Korea (NRF) grant funded by the Korea government (MSIT) (No.2020R1C1C1008195) and the National Convergence Research of Scientific Challenges through the National Research Foundation of Korea (NRF) funded by Ministry of Science and ICT (NRF2020M3F7A1094300).
Young Min Kim is the corresponding author.

\bibliography{bibliography}
\bibliographystyle{iclr2022_conference}

\newpage
\appendix
\section{Implementation Details}
\label{app:sec:implementation_details}

We use Pytorch~(\cite{paszke2019pytorch}) and the sparse convolution library MinkowskiEngine~(\cite{choy20194d}) for all implementations.

\subsection{Autoencoder}
\label{app:sec:autoencoder}

\begin{figure}
    \centering
    \includegraphics[width=0.8\textwidth]{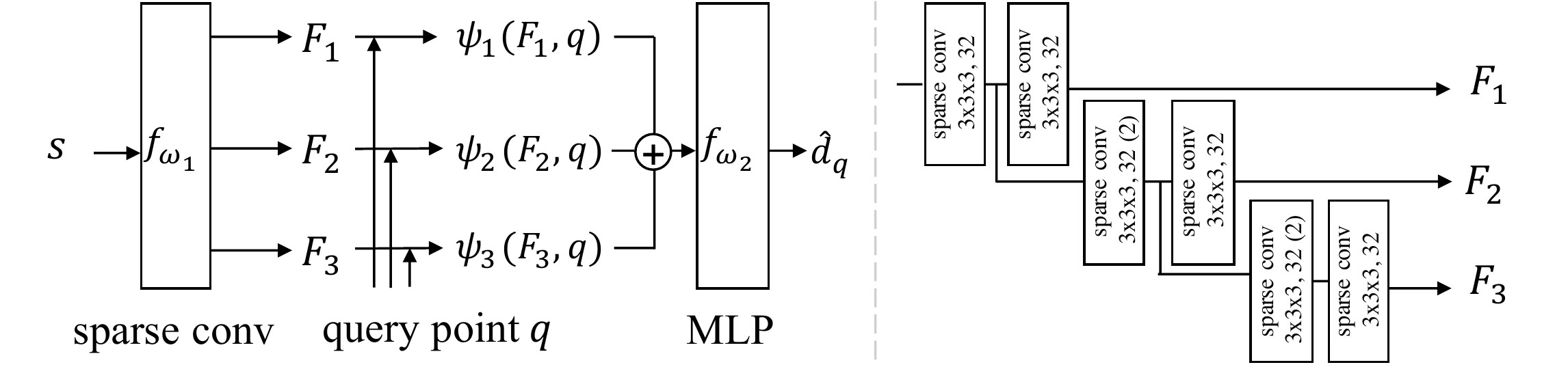}
    \caption{Neural network architecture for the decoder $f_\omega$. 
    The left side shows the overall architecture for the decoder $f_\omega$ and the right side shows the architecture for sparse convolution layers $f_{\omega_1}$. 
    The parenthesis denotes the stride of the sparse convolution and every convolution except the feature extracting layer is followed by batch normalization and ReLU activation.
    }
    \label{fig:decoder_architecture}
\end{figure}
\textbf{Network Architecture.}
The autoencoder is composed of two modules, encoder $g_\phi$ and decoder $f_\omega$ to convert between the sparse voxel embedding and implicit fields, which is depicted in Fig.~\ref{fig:method_overview}.
The encoder $g_\phi$ generates the sparse voxel embedding $s$ from the coordinate-distance pairs  $P = \{(p, d_p) | p \in \mathbb{R}^3, d_p \in \mathbb{R} \}$, where $p$ is a 3D coordinate and $d_p$ is the (signed) distance to the surface, $s = g_\phi(P)$.
The decoder $f_\omega$ regresses the implicit field value $d_q$ at the 3D position $q \in \mathbb{R}^3$ given the sparse voxel embedding $s$, $d_q=f_\omega(q, s)$.

The encoder $g_\phi$ is a a local PointNet (\cite{qi2016pointnet}) implemented with MLP of 4 blocks with hidden dimension 128, as in ConvOcc~(\cite{peng2020conv_onet}).
Given a set of coordinate-distance pair $P$, we find the nearest cell $c_p$ for each coordinate-distance pair $(p, d_p)$.
Then we make the representation sparse by removing the coordinate-distance pair if the nearest cell $c_p$ do not contain the surface.
For the remaining pairs, we normalize the coordinate $p$ to the local coordinate $p' \in [-1, 1]^3$ centered at the nearest voxel.
Then we use the local coordinate to build a vector in $\mathbb{R}^4$ for coordinate-distance pair $(p', d_p)$ which serves as the input to the PointNet for the encoder $g_\phi$.
The local PointNet architecture employs average pooling to the point-coordinate pairs that belong to the same cell.

The architecture of decoder $f_\omega$, inspired by IFNet~(\cite{chibane20ifnet}) is depicted in Fig.~\ref{fig:decoder_architecture}.
As shown on the left, the decoder $f_\omega$ can be decomposed into the sparse convolution $f_{\omega_1}$ that extracts hierarchical feature, followed by trilinear interpolation of the extracted feature $\Psi$, and final MLP layers $f_{\omega_2}$.
The sparse convolution layer $f_{\omega_1}$ aggregates the information into multi-level grids in $n$ different resolutions, i.e., $f_{\omega_1}(s) = F_1, F_2, ..., F_{n}$,
Starting from the original resolution $F_1$, $F_k$ contains a grid that is downsampled $k$ times.
However, in contrast to the original IFNet, our grid features are sparse, since they only store the downsampled occupied cells of sparse voxel embedding $s$.
The sparse representation $F_k = \{(c, o_c, e_c) | c \in \mathds{Z}^3, o_c \in \{0, 1\}, e_c \in \mathbb{R}^L\}$ is composed of occupancy $o_c$ and $L$-dimensional feature $e_c$ for each cell, like sparse voxel embedding.
The grid points $c$ that are not occupied are considered as having zero features.
We use $n=3$ levels, with feature dimension $L=32$ for all experiments.

The multi-scale features are then aggregated to define the feature for a query point $q$: $\Psi (f_{\omega_1}(s), q) = \sum_k \psi_k(F_k, q)$.
Here $\psi_k(F_k, q) \in \mathbb{R}^L$ is the trilinear interpolation of features in discrete grid $F_k$ to define the feature at a particular position $q$, similar to IFNet.
We apply trilinear interpolation at each resolution $k$ then combine the features.

Lastly, the MLP layer maps the feature at the query point  $q$ to the implicit function value $\hat{d}_q$, $f_{\omega_2}: \mathbb{R}^L \rightarrow \mathbb{R}$.
The MLP is composed of 4 ResNet blocks with hidden dimension 128 with $\tanh$ as the final activation function.
The final value $\hat{d}_q$ is the truncated signed distance within the range of $[-1,1]$ except for 3DFront~(\cite{fu20203dfront}) which does not have a watertight mesh.
We use unsigned distance function for 3DFront as in NDF~(\cite{chibane2020ndf}) by changing the codomain of the decoder to $[0, 1]$ with sigmoid as the final activation function. 
Thus, the decoder can be summarized as $f_\omega(s, q) = f_{\omega_2} (\Psi (f_{\omega_1}(s), q))$.

We would like to emphasize that the autoencoder does not use any dense 3D convolutions, and thus the implementation is efficient,
especially when reconstructing the sparse voxel embedding.
Since the decoding architecture only consists of sparse convolutions and MLPs, the memory requirement increases linearly with respect to the number of occupied voxels.
In contrast, the requirement for dense 3D convolution increases cubic with respect to the maximum size of the scene.

\begin{figure}
    \centering
    \includegraphics[width=0.85\textwidth]{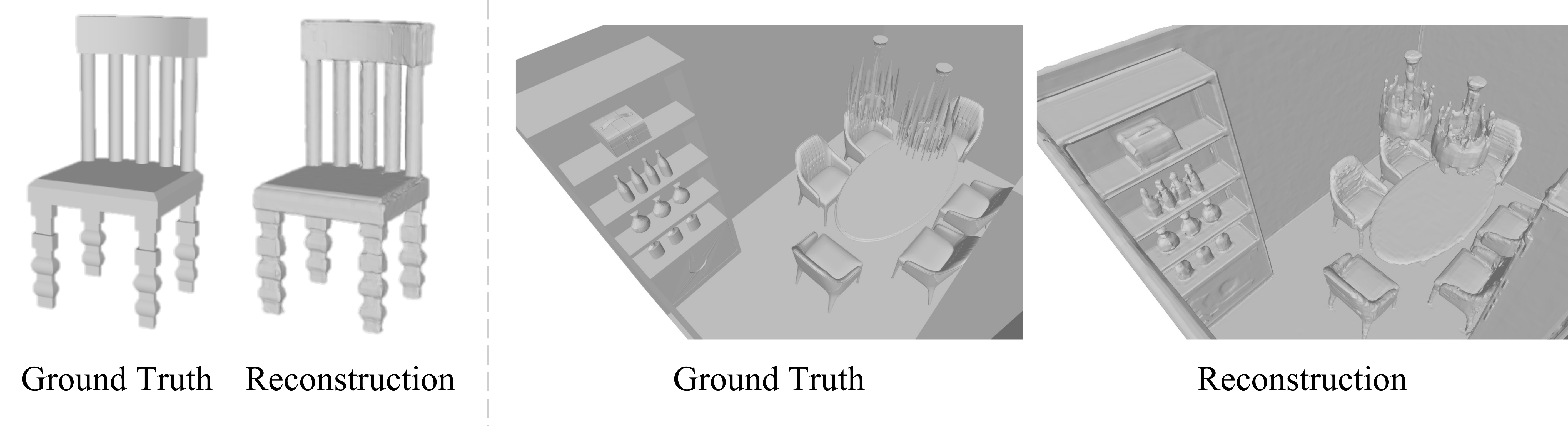}
    \caption{
    Visualizations of reconstructions by autoencoder.
    We visualize a reconstruction of a chair using signed distance fields (left) and a scene in 3DFront using unsigned distance fields (right).
    Both reconstructions show that sparse voxel embedding is able to reconstruct a continuous surface.
    }
    \label{fig:autoencoder_reconstruction}
\end{figure}

\textbf{Reconstruction Results.}
Fig.~\ref{fig:autoencoder_reconstruction} shows a reconstruction of a chair (left) using signed distance fields and a scene in 3DFront (right) using unsigned distance fields.
The figure shows that our autoencoder can faithfully reconstruct a continuous surface using the sparse voxel embedding.
However, the usage of unsigned distance fields~(\cite{chibane2020ndf}) representation tends to generate relatively thicker reconstructions compared to that of signed distance fields.
Without a clear distinction between the interior/exterior, the unsigned representation makes it difficult to scrutinize the zero-level isosurface and thus results in comparably thick reconstruction.
Thus, the flexibility of being able to represent any mesh with various topology comes at a cost of expressive power, in contrast to signed distance fields which can only represent closed surfaces.
Our work can be further improved by utilizing more powerful implicit representation techniques, such as SIREN~(\cite{sitzmann2019siren}), but we leave it as future work.

\textbf{Hyperparameters and Implementation Details.}
We train the autoencoder with Adam~(\cite{kingma2015adam}) optimizer with learning rate 5e-4 and use batch size of 3, 1, and 4 for ShapeNet scene~(\cite{peng2020conv_onet}), 3DFront~(\cite{fu20203dfront}), and ShapeNet~(\cite{chang2015shapenet}) dataset, respectively.
The autoencoder for scene completion is pretrained with all classes of ShapeNet assuming that the local geometric details  for scenes are represented with the patches from individual objects.

\subsection{Transition Model}
\label{app:implementation_details}

\begin{figure}
    \centering
    \includegraphics[width=\textwidth]{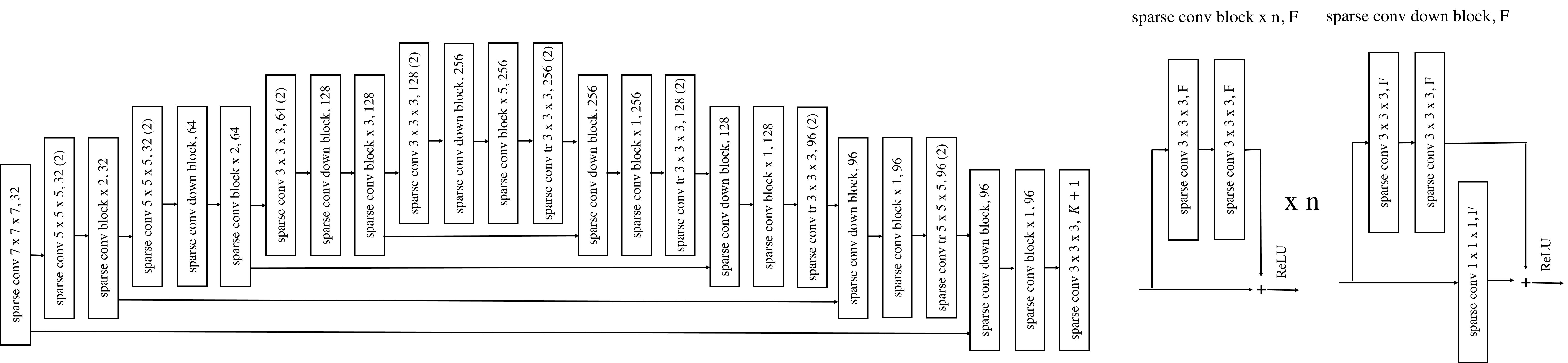}
    \caption{Neural network architecture for the transition model $p_\theta$. We employ the architecture of U-Net.
    The paranthesis indicates the stride of convolutions and each convolution is followed by batch normalization.
    }
    \label{fig:transition_architecture}
\end{figure}

\textbf{Network Architecture.}
Following GCA~(\cite{zhang2021gca}), we implement the transition model by using a variant of U-Net~(\cite{ronneberger2015unet}).
The input to the sparse convolution network is the sparse voxel embedding $s^t$, and the output feature size of the last convolution is $K + 1$, where we use generalized sparse convolution for the last layer to obtain the occupancy (with dimension 1) and the latent code (with dimension $K$) of the neighborhood.
Unlike GCA, our implementation does not need an additional cell-wise aggregation step.

\textbf{$s^0$ Conditioned Model.}
The $s^0$ conditioned model is a variant of our model, where we employ the use of encoder $g_\phi$ to estimate the initial latent codes of initial state $s^0$ and condition every transition kernel on $s^0$.
Given incomplete point cloud $R = \{c_1, c_2, ..., c_M\}$ with $c_i \in \mathbb{R}^3$, the variant constructs the initial $s_0$ by encoding a set of coordinated-distance pair $P_R = \{(c, 0)| c\in R\}$, where the distance is set to 0 for every coordinate in $R$.
Since the point cloud is sampled from the surface of the original geometry, this is a reasonable estimate of the latent code.

The transition kernel $p_\theta$ is conditioned on $s^0$ by taking a sparse voxel embedding $\hat{s}^{t}$ of latent code dimension $2K$ that concatenates the current state $s^t$ and $s^0$ as the input.
Specifically, we define $\hat{s^t}$ to be $\hat{s}^t=\{(c, o_c, z_c)| o_c = o^{s^t}_c \vee o^{s^0}_c, z_c = [z^{s^0}_c, z^{s^t}_c] \in \mathbb{R}^{2K}, c \in \mathbb{Z}^3 \}$, where the $o^{s^t}_c, o^{s^0}_c, z^{s^t}_c, z^{s^0}_c$ are the occupancies and latent codes of $s^t$ and $s^0$. 
The occupancy is set to one if either $s^t$ or $s^0$ is occupied and the latent code is set to be the concatenation of latent codes of $s^0$ and $s^t$.
Thus, the only modification to the neural network $p_\theta$ is the size of input feature dimension, which is increased from $K$ to $2K$.

\textbf{Hyperparameters and Implementation Details}
We train our model with Adam~(\cite{kingma2015adam}) optimizer with learning rate 5e-4.
The transition kernel is trained using infusion schedule $\alpha^t = 0.1 + 0.005t$ and standard deviation schedule $\sigma^t = 10^{-1 - 0.01t}$.
We use the accelerated training scheme introduced in GCA~(\cite{zhang2021gca}) by managing the time steps based on the current step.
We use neighborhood size $r=2$, with metric $d$ induced by $L_1$ norm, except for the ShapeNet~(\cite{chang2015shapenet}) experiment on $64^3$ voxel resolution which uses $r=3$.
For transition model training, we utilize maximum batch size of 32, 16 and 6, for ShapeNet scene, 3DFront, ShapeNet dataset, respectively.
We use adaptive batch size for the transition model, which is determined by the total number of cells in the minibatch for efficient GPU memory usage.
Lastly, we use mode seeking steps $T'=5$ for all experiments.

\textbf{Visualizations and Testing.} 
All surface visualizations are made by upsampling the original voxel resolution 4 times, where we efficiently query from the upsampled grid only near the occupied cells of the final state $s^{T + T'}$.
Then, the mesh is extracted with marching cubes~(\cite{lorensen1987marching}) with no other postprocessing applied.
From the $4\times$ upsampled high resolution grid, we extract cells with the distance value below 0.5 which serve as the point cloud to compute the metrics for testing.

\subsection{Baselines}
We use the official implementation or the code provided by the authors by contact with default hyperparameters unless stated otherwise.
All results are obtained by training on our dataset.
The output point cloud used for testing, is obtained by sampling from mesh with the same resolution if possible, unless stated otherwise.
For GCA~(\cite{zhang2021gca}) we use the same hyperparameters as ours for fair comparison.
The mesh is created by using marching cubes~(\cite{lorensen1987marching}) with the voxel resolution.
We use the volume $64^3$ encoder for ConvOcc~(\cite{peng2020conv_onet}) which achieves the best results in the original ShapeNet scene dataset~(\cite{peng2020conv_onet}).
For IFNet~(\cite{chibane20ifnet}), we use the ShapeNet128Vox model as used in the work of \cite{siddiqui2021retrievalfuse} with occupancy implicit representation.
cGAN~(\cite{wu2020msc}) is tested by 2,048 generated point cloud.

\subsection{Datasets}

\textbf{ShapeNet Scene.}
ShapeNet scene~(\cite{peng2020conv_onet}) contains synthetic rooms that contain multiple ShapeNet~(\cite{chang2015shapenet}) objects, which has randomly scaled floor and random walls, normalized so that the longest length of the bounding box has length of 1. 
The number of rooms for train/val/test split are 3750/250/995.
We sample SDF points for training our model.
The input is created by iteratively sampling points from the surface and removing points within distance 0.1 for each instance.
We run 20 iterations and each removal is revoked if the aggregated removal exceeds that of the guaranteed rate.
Since we are more interested in the reconstruction of objects, the walls and floors are guaranteed to have minimum rate of 0.8, regardless of the defined minimum rate. 
Lastly, we sample 10,000 points and add normal noise with standard deviation 0.005.
For computing the metrics at test time, we sample 100,000 points.

\textbf{3DFront.}
3DFront~(\cite{fu20203dfront}) is a large-scale indoor synthetic scene dataset with professionally designed layouts and contains high quality 3D objects.
We collect rooms that contain multiple objects, where 10\% of the biggest rooms are removed from the data to discard rooms with immense size for stable training.
Thus, 7044/904/887 rooms are collected as train/val/test rooms.
The incomplete point cloud is obtained by randomly sampling with density 219 points/$\text{m}^2$ and then removed and noised as in ShapeNet scene, but with removal sphere distance proportional to the object size and noise of standard deviation of 1cm.
During evaluation, we sample points proportionally to the area for each ground truth surface with density of 875 points/$\text{m}^2$ to compute the metrics.

\textbf{ShapeNet.}
We use chair, sofa, table classes of ShapeNet~(\cite{chang2015shapenet}), where the signed distance values of the query points are obtained by using the code of DeepSDF~(\cite{park2019deepsdf}).
The shapes are normalized to fit in a bounding box of $[-1, 1]^3$.
The partial observation is generated by extracting points within a sphere of radius 0.5, centered at one of the surface points. Then we sample 1,024 points to create a sparse input
and compare the metrics by sampling 2,048 points for a fair comparison against cGAN~(\cite{wu2020msc}).

\newpage
\section{Proofs and Detailed Derivations}
\label{app:sec:proofs}

\subsection{Variational Bound Derivation}
\label{app:elbo_derivation}
We derive the variational lower bound for cGCA (Eq.~(\ref{eq:elbo})) below.
The lower bound is derived by the work of \cite{sohl2015nonequil} and we include it for completeness.

\begin{align*}
\log p_\theta (x)
&= \log \sum_{s^{0:T - 1}} p_\theta (s^{0:T - 1}, x) \\
&= \log \sum_{s^{0:T - 1}} q_\theta (s^{0:T-1} | x) \frac{p_\theta (s^{0:T - 1}, x)}{q_\theta (s^{0:T-1} | x)} \\
&\ge \sum_{s^{0:T - 1}} q_\theta (s^{0:T-1} | x)  \log \frac{p_\theta (s^{0: T - 1}, x)}{q_\theta (s^{0:T-1} | x)} \quad (\because \text{Jensen's inequality}) \\
&= \sum_{s^{0:T - 1}} q_\theta (s^{0:T-1} | x) (
\log \frac{p (s^0)}{q (s^0)}
+ \sum_{0 \leq t < T - 1} \log \frac{p_\theta (s^{t + 1}|s^t)}{q_\theta (s^{t + 1}|s^t, x)}
+ \log p (x | s^{T -1})
)\\
&= 
\log \frac{p (s^0)}{q (s^0)} + 
\sum_{s^{0:T - 1}} q_\theta (s^{0:T-1} | x)  \sum_{0 \leq t < T - 1} \log \frac{p_\theta (s^{t + 1}|s^t)}{q_\theta (s^{t + 1}|s^t, x)}
+\E_{q_\theta}[\log p_\theta(x | s^{T - 1})]
\end{align*}
The second term of the right hand side can be converted into KL divergence as following: 
\begin{align*}
&\sum_{s^{0:T - 1}} q_\theta (s^{0:T-1} | x) \sum_{0 \leq t < T - 1} \log \frac{p_\theta (s^{t + 1}|s^t)}{q_\theta (s^{t + 1}|s^t, x)} \\
&= \sum_{s^{0:T - 1}} \prod_{0 \leq i < T - 1} q_\theta (s^{i + 1} |s^i, x) \sum_{0 \leq t < T - 1} \log \frac{p_\theta (s^{t + 1}|s^t)}{q_\theta (s^{t + 1}|s^t, x)} \\
&= \sum_{s^{0:T - 1}} \sum_{0 \leq t < T - 1} \prod_{\substack{0 \leq i < T - 1 \\ i\neq t}} q_\theta (s^{i + 1} |s^i, x) q_\theta (s^{t + 1} |s^t, s^T) \log \frac{p_\theta (s^{t + 1}|s^t)}{q_\theta (s^{t + 1}|s^t, x)} \\
&= \sum_{0 \leq t < T - 1} \sum_{s^{t + 1}} (
\sum_{s^{-(t + 1)}}
\prod_{\substack{0 \leq i < T - 1 \\ i\neq t}} q_\theta (s^{i + 1} |s^i, x)
)
q_\theta (s^{t + 1} |s^t, x) \log \frac{p_\theta (s^{t + 1}|s^t)}{q_\theta (s^{t + 1}|s^t, x)} \\
& \quad (s^{-(t + 1)} \text{ denotes variables $s^{0:T - 1}$ except } s^{t + 1})\\
&= \sum_{0 \leq t < T - 1} \sum_{s^{t + 1}}
q_\theta (s^{t + 1} |s^t, x) \log \frac{p_\theta (s^{t + 1}|s^t)}{q_\theta (s^{t + 1}|s^t, x)}\\
&= \sum_{0 \leq t < T - 1} -\sum_{s^{t + 1}}
q_\theta (s^{t + 1} |s^t, x) \log \frac{q_\theta (s^{t + 1}|s^t, x)}{p_\theta (s^{t + 1}|s^t)}\\
&= \sum_{0 \leq t < T - 1} -D_{KL}(q_\theta (s^{t + 1}|s^t, x) \| p_\theta (s^{t + 1}|s^t))
\end{align*}
Thus,
\begin{equation*}
\log p_\theta(x) \ge \log \frac{p(s^0)}{q(s^0)} + \sum_{0 \le t < T - 1} -D_{KL}(q_\theta(s^{t + 1} | s^t, x) \| p_\theta(s^{t + 1} | s^t))) + \E_{q_\theta}[\log p_\theta(x | s^{T - 1})]
\end{equation*}
is derived.

\subsection{KL Divergence Decomposition}
\label{app:kl_decomposition}
We derive the decomposition of KL divergence (Eq.~(\ref{eq:transition_loss}) ) below.
\begin{align*}
\mathcal{L}_t 
&= D_{KL}(q_\theta(s^{t + 1} | s^t, x) \| p_\theta(s^{t + 1} | s^t)) \\
&= \sum_{c \in \mathcal{N} (s^t)} D_{KL} (q_\theta(o_c, z_c | s^t, x) \| p_\theta(o_c, z_c | s^t)) \quad (\because \text{Eq.~(\ref{eq:transition_kernel}), Eq.(~\ref{eq:infusion_kernel}}) )  \\
\begin{split}
& = 
\sum_{c \in \mathcal{N} (s^t)}  
D_{KL} (q_\theta (o_c | s^t, x) \| p_\theta (o_c | s^t) )  \\
& \quad  +  \sum_{o_c \in \{0, 1\}} q_\theta (o_c| s^t, x) D_{KL} (q_\theta (z_c | s^t, x, o_c) \| p_\theta (z_c | s^t, o_c) ) \\
& \quad (\because \text{Eq.~(\ref{eq:transition_kernel}), Eq.~(\ref{eq:infusion_kernel}) and chain rule for conditional KL divergence}) 
\end{split} \\
\begin{split}
& = 
\sum_{c \in \mathcal{N} (s^t)}  
D_{KL} (q_\theta (o_c | s^t, x) \| p_\theta (o_c | s^t) )  \\
& \quad  +  q_\theta (o_c = 1| s^t, x) D_{KL} (q_\theta (z_c | s^t, x, o_c = 1) \| p_\theta (z_c | s^t, o_c = 1) ) \\
& \quad (\because p_\theta (z_c | s^t, o_c = 0)  = q_\theta (z_c | s^t, x, o_c = 0) = \delta_{0}(z_c)) 
\end{split}
\end{align*}
Thus,  Eq.~(\ref{eq:transition_loss}) is derived.

\subsection{Approximating $\mathcal{L}_{\mathrm{final}}$ by $\mathcal{L}_{T - 1}$}
In this section, we claim that $\mathcal{L}_{\mathrm{final}}$ can be replaced by $\mathcal{L}_{T - 1}$, which allows us to train using a simplified objective.
First, we show that $\mathcal{L}_{\mathrm{final}} = \E_{q_\theta}[\log p_\theta(x | s^{T - 1})] = \infty$ due to the usage of Dirac distribution and introduce a non-diverging likelihood by replacing the Dirac function $\delta_0 (z_c)$ with indicator function $\mathds{1}[z_c = 0]$.
Recall, 
\begin{align*}
	\mathcal{L}_{\mathrm{final}} 
	&= \E_{q_\theta}[\log p_\theta(x | s^{T - 1})] \\
	&= \sum_{c \in \mathcal{N} (s^{T-1})} \log \{
	(1 - \lambda_{\theta, c} ) \mathds{1}[o_c^x=0] \delta_{0}(z_c^x)   + \lambda_{\theta, c} \mathds{1}[o_c^x=1]  {N}(z^x_c; \; \mu_{\theta, c}, \sigma^{T-1} \mI)
	\}
\end{align*}
 Thus, if any cell $c \in \mathcal{N} (s^{T - 1})$ is unoccupied, the likelihood diverges to $\infty$ since $z^x_c=0$.
 This disables us to use negative log-likelihood as a loss function, since all the gradients will diverge.
  
 This problem can be easily resolved by substituting $\delta_0 (z^x_c)$ with indicator function $\mathds{1}[z^x_c = 0]$.
 Both functions have non-zero value only at $z_c \neq 0$, but the former has value of $\infty$ while the latter has value $1$.
 While $\mathds{1}[z^x_c = 0]$ is not a probability measure, i.e. $\int_{z^x_c} \mathds{1}[z^x_c = 0] \neq 1$, replacing $\delta_0 (z^x_c)$ will have the effect of reweighting the likelihood at $z^x_c=0$ from $\infty$ to $1$, with the likelihood at other values $z^x_c \neq 0$ unchanged.
 Thus, $\mathds{1}[z^x_c = 0]$ is a natural replacement of  $\delta_0 (z^x_c)$ for computing a valid likelihood.
 
\label{app:final_state_loss}
\auxloss*
\begin{proof}
	For brevity of notations, we define
	\begin{equation*}
		\sigma = \sigma^t,
	\end{equation*}
	\begin{equation*}
		\lambda_{q, c} = (1 - \alpha^t) \lambda_{\theta, c} + \alpha^t \mathds{1}[c \in G_{x}(s^t)],
	\end{equation*}
	\begin{equation*}
		\mu_{q, c} = (1 - \alpha^t) \mu_{\theta, c} + \alpha^t z^x_c,
	\end{equation*}
	where $\lambda_{q, c}, \mu_{q, c}$ is the occupancy probability, and mean of the infusion kernel $q$ at cell $c$.
	With $T \gg 1$, there exists $T_1 < T$, such that $\alpha^{T_1} = 1$, since $\alpha^t = \max(\alpha_1t + \alpha_0, 1)$.
	Since $d(c, c') = 0$ if and only if $c = c'$, $G_{x}(s^t) = \{\mathrm{argmin}_{c \in \mathcal{N}(s^t)}  d(c, c') \mid c' \in x\}$ must converge to a set of occupied coordinates of $x$ since the distance decreases for each cell $c \in x$ at every step.
	This indicates that  $\lambda_{q, c} = \mathds{1}[c \in G_{x}(s^{T - 1})] = o^x_c$. 
	Also, $\mu_{q, c} = z^x_c$ holds for all $t > T_1$.
	
	Then, $\mathcal{L}_{T - 1}$ and $\mathcal{L}_{\mathrm{final}}$ can be expressed as the following:
	\begin{align*}
	\mathcal{L}_{T - 1}
	&= - D_{KL} (q_\theta(s^T | s^{T - 1}, x) \| p_\theta(s^T | s^{T - 1}) \\
	\begin{split}
	& = - \sum_{c \in \mathcal{N} (s^{T - 1})}  
	D_{KL} (q_\theta (o^T_c | s^{T - 1}, x) \| p_\theta (o^T_c | s^{T - 1}) )  \\
	& \quad  +  q_\theta (o^T_c = 1| s^{T - 1}, x) D_{KL} (q_\theta (z_c | s^{T - 1}, x, o^T_c = 1) \| p_\theta (z_c | s^{T - 1}, o^T_c = 1) ) \quad (\because \text{Eq.~(\ref{eq:transition_loss})}) \\
	\end{split} \\	
	&= - \sum_{c \in \mathcal{N} (s^{T - 1})}  
	\lambda_{q, c} \log \frac{\lambda_{q, c} }{\lambda_{\theta, c} } + (1 - \lambda_{q, c} ) \log \frac{ 1 - \lambda_{q, c}} { 1 - \lambda_{\theta, c}}  + \lambda_{q, c} \frac{1}{2\sigma^2} \|\mu_{\theta, c} - \mu_{q, c}\|^2 \\
	&= - \sum_{c \in \mathcal{N} (s^{T - 1})}  
	o^x_c \log \frac{o^x_c }{\lambda_{\theta, c} } + (1 - o^x_c) \log \frac{ 1 - o^x_c} { 1 - \lambda_{\theta, c}}  + o^x_c \frac{1}{2\sigma^2} \|z^x_c - \mu_{q, c}\|^2 \\
	& \quad (\because \lambda_{\theta, c} = o^x_c, \mu_{q, c} = z^x_c)
	\end{align*}
	
	\begin{align*}
	\mathcal{L}_{\mathrm{final}} 
	&= \E_{q_\theta}[\log p_\theta(x | s^{T - 1})] \\
	&= \sum_{c \in \mathcal{N} (s^{T-1})} \log \{
	(1 - \lambda_{\theta, c} ) \mathds{1}[o_c^x=0] \delta_{0}(z_c^x)   + \lambda_{\theta, c} \mathds{1}[o_c^x=1]  {N}(z^x_c; \; \mu_{\theta, c}, \sigma^{T-1} \mI)
	\} \\
	&= \sum_{c \in \mathcal{N} (s^{T - 1})} \log \{
		(1 - \lambda_{\theta, c} ) \mathds{1}[o^x_c=0] \mathds{1}[z^x_c = 0]   \\
		& \quad+ \lambda_{\theta, c} \mathds{1}[o^x_c=1] \exp(-\frac{1}{2\sigma^2} \| z^x_c - \mu_{\theta, c}\|^2_2 ) (2 \pi \sigma)^{-\frac{K}{2}}
	\}
	\end{align*}
	We divide cases for $o^x_c = 0$ and $o^x_c = 1$.
	When $o^x_c = 0$, 
	\begin{equation*}
			\mathcal{L}_{T - 1} = \mathcal{L}_{\mathrm{final}} = \sum_{c \in \mathcal{N} (s^{T - 1})}   \log (1 - \lambda_{\theta, c}) ,
	\end{equation*}
	where we set  $y\log y = 0$ for $y=0$, since $\lim_{y\rightarrow 0} y \log y = 0$.
	Analogously, when $o^x_c = 1$,
	\begin{equation*}
	\mathcal{L}_{T - 1} = \sum_{c \in \mathcal{N} (s^{T - 1})}  \log \lambda_{\theta, c} - \frac{1}{2\sigma^2} \|z^x_c - \mu_{\theta, c}\|^2	 
	\end{equation*}
	\begin{equation*}
	\mathcal{L}_{\mathrm{final}} = \sum_{c \in \mathcal{N} (s^{T - 1})}  \log \lambda_{\theta, c} - \frac{1}{2\sigma^2} \|z^x_c - \mu_{\theta, c}\|^2 - \frac{K}{2} \log (2 \pi \sigma).
	\end{equation*}
	Thus, if $o^x_c = 1$, $\mathcal{L}_{T - 1} = \mathcal{L}_{\mathrm{final}} - \frac{K}{2} \log (2 \pi \sigma)$, where $ - \frac{K}{2} \log (2 \pi \sigma)$ is a constant.
	We can conclude that  $\nabla_\theta \mathcal{L}_{T - 1} = \nabla_\theta \mathcal{L}_{\mathrm{final}}$ for all $o^x_c$.
	
\end{proof}

\newpage
\section{Difference of Formulation compared to GCA}
\label{app:gca_difference}
We elaborate the difference of formulations compared to GCA~(\cite{zhang2021gca}).

\textbf{Usage of Latent Code.}
The biggest difference between GCA and our model is the usage of local latent codes~(\cite{chiyu2020local, chabra2020deep_local_shapes}) to generate continuous surface.
Although the formulation of GCA allows scalable sampling of high resolution voxel, it cannot produce continuous surface due to predefined voxel resolution.
However, we define a new state named sparse voxel embedding by appending local latent codes to the voxel occupancy.
Sparse voxel embedding can be decoded to produce continuous geometry, outperforming GCA in accuracy in all the evaluated datasets in Sec.~\ref{sec:experiments}.

\textbf{Training Disconnected Objects.}
During training, GCA assumes a property named partial connectivity between an intermediate state $s^t$ and ground truth state $x$ to guarantee the convergence of infusion sequences to ground truth $x$ (Property 1 and Proposition 1 of \cite{zhang2021gca}).
The connectivity means that any cell in $s^t$ can reach any cell in $x$ following the path of cells in $x$ with local transitions bounded by neighborhood size $r$.
The assumption is required since the infusion kernel of single cell $q^t(o_c | s^t)$ for GCA, defined as 
\begin{equation}
    q^t(o_c | s^t) = Ber((1 - \alpha^t) \lambda_{\theta, c} + \alpha^t \mathds{1}[c \in x]),
\end{equation}
inserts the ground truth cell $c \in x$ if the cell is within the neighborhood of current state $\mathcal{N}(s^t)$, i.e. $c \in \mathcal{N}(s^t) \cap x$.
However, we eliminate the assumption by defining an auxiliary function
\begin{equation}
    G_x(s) = \{\argmin_{c \in \mathcal{N}(s)} d(c, c')| c' \in x\},
\end{equation}
and modifying the formulation as
\begin{equation}
    q^t(o_c | s^t) = Ber((1 - \alpha^t) \lambda_{\theta, c} + \alpha^t \mathds{1}[c \in G_x(s^t)]),
\end{equation}
by switching $x$ in the infusion term to $G_x(s^t)$ from the original infusion kernel.
The usage of $G_x(s^t)$ guarantees infusing a temporary ground truth cell $c$ that assures reaching all cells in $x$ regardless of the connectivity.
The assumption is no longer required and therefore our training procedure is a general version of GCA.

\textbf{Training Procedure for Maximizing Log-likelihood.}
Both GCA and our model utilizes the infusion training~(\cite{bordes2017infusion}), by emulating the sequence of states that we aim to learn.
The objective of GCA does not explicitly show the relationship between the training objective and maximizing the log-likelihood, which is what we aim to achieve.
In Sec.~\ref{sec:training}, we give a detailed explanation of how our training objective approximates maximizing the lower bound of log-likelihood of data distribution.
Thus, our training procedure completes the heuristics used in original work of GCA.

\section{Analysis on Scalability}
\label{app:scalabilty}

\begin{table}[t]
    \caption{
		Number of neural network parameters and GPU memory usage comparison for different grid size with 3DFront dataset.
		The grid size indicates the largest length of the scene with voxel resolution 5cm and the unit of GPU memory is GB.
	}
	\label{table:result_scalabilty}
	\centering
	\resizebox{0.7\textwidth}{!}{
	\begin{tabular}{l|c|ccccc}
		\toprule
		 {} & \multicolumn{1}{c|}{} & \multicolumn{5}{c}{grid size}  \\

        Method & \# of parameters (x$10^7$) & 55 & 77 & 93 & 124 & 235 \\
        \midrule
        
        ConvOcc & 0.42 & 1.32 & 1.81 & 2.38 & 4.13 & 21.68 \\
            
        cGCA & 4.12  & 2.40 & 2.81 & 2.88 & 2.84 & 3.24 \\
    
		\bottomrule
	\end{tabular}
	}
\end{table}

In this section, we investigate the scalability of our approach.
Scene completion needs to process the 3D geometry of large-scale scenes with multiple plausible outputs, and sparse representation is crucial to find the diverse yet detailed solutions.

As the scale of the 3D geometry increases, the approaches utilizing the dense convolutional network cannot observe the entire scene at the same time.
As a simple remedy, previous methods employ sliding window techniques for 3D scene completion~(\cite{siddiqui2021retrievalfuse, peng2020conv_onet}).
Basically they divide the input scene into small segments, complete each segment,  and then fuse them.
The formulation can only observe the information that is within the same segment for completion, and assumes that the divisions contain all the information necessary to complete the geometry.
The size of the maximum 3D volume that recent works utilize amounts to a 3D box whose longest length is 3.2m for $64^3$ resolution voxel, where individual cells are about $5$~cm in each dimension.
Note that the size is comparable to ordinary furniture, such as sofa or dining table, whose spatial context might not be contained within the same segment. 
The relative positions between relevant objects can be ignored, and the missing portion of the same object can accidentally be separated into different segments.
With our sparse representation, the receptive field is large enough to observe the entire room, and we can stably complete the challenging 3D scenes with multiple objects capturing the mutual context.

We further provide the numerical comparison of our sparse representation against the dense convolution of  ConvOcc~(\cite{peng2020conv_onet}).
Table~\ref{table:result_scalabilty} shows the number of neural network parameters and the GPU usage for processing single rooms of different sizes in 3DFront~(\cite{fu20203dfront}).
We used 5~cm voxel resolution and the grid size indicates the number of cells required to cover the longest length of the bounding box for the room.
The maximum GPU usage during 25 transitions of cGCA  is provided. %
We did not include the memory used for the final reconstruction which regresses the actual distance values at dense query points, since the memory usage differs depending on the number of query points.

Our neural network is powerful, employing about 10 times more parameters than ConvOcc.
With the efficient sparse representation, we can design the network to capture larger receptive fields with deeper network, enjoying more expressive power.
This is reflected in the detailed reconstruction results of our approach.
The memory usage is similar for smaller rooms for both approaches, but cGCA becomes much more efficient for larger rooms.
When the grid size for room reaches 235, our method consumes 7 times less GPU memory compared to ConvOcc.
As widely known, the memory and computation for 3D convolution increases cubic to the grid resolution with dense representation, while those for sparse representations increase with respect to the number of occupied voxels, which is much more efficient.
Therefore our sparse representation is scalable and crucial for scene completion.

\section{Effects of Mode Seeking Steps}
\label{app:ablation_mode_seeking}
\begin{figure}[h!]
    \centering
    \includegraphics[width=\textwidth]{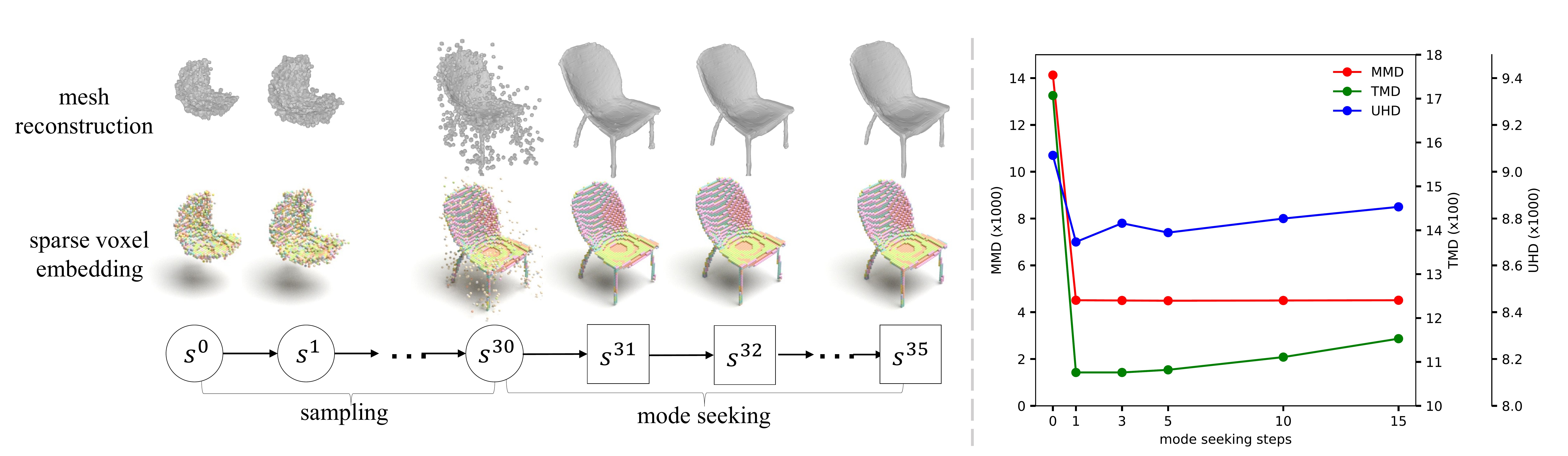}
    \caption{
        Ablation study on the effects of mode seeking step $T'$.
        The left shows an example of chair generation, where the bottom row are sparse voxel embeddings and the top row is the mesh reconstruction of each state.
        The right is the result on the effects of mode seeking step $T'$ tested with probabilistic shape completion on sofa dataset.
        }
    \label{fig:app:mode_seeking}
\end{figure}

We perform an ablation study on the effects of mode seeking steps, which removes the noisy voxels with low-occupancy probability as discussed in Sec.~\ref{sec:sampling}.
We test with ShapeNet dataset~(\cite{chang2015shapenet}) with voxel size $64^3$ and neighborhood radius $r=3$, which tends to show the most intense effect of mode seeking phase due to large neighborhood size.

The right side of Fig.~\ref{fig:app:mode_seeking} shows the graph of metrics on quality (MMD), fidelity (UHD), and diversity (TMD) with respect to mode seeking steps.
There is a dramatic decrease in all metrics on the first step of mode seeking step.
The effect is visualized in the left side of Fig.~\ref{fig:app:mode_seeking}, where the unlikely voxels are removed in a single mode seeking step.
After the first mode seeking step, UHD and TMD metrics increase slightly, but overall, the metrics remain stable.
We choose $T'=5$ for all the conducted experiments to ensure that shape reaches a stable mode.

\section{Analysis on Varying Completeness of Input}
\label{app:ablation_completeness}
In this section, we further investigate the effects on the level of completeness of input.
Sec.~\ref{app:sparse_input} shows results of completion models on sparse inputs and Sec.~\ref{app:non_ambiguous_input} provides how our model behaves when a non-ambiguous input is given.

\subsection{Results on Sparse Input}
\label{app:sparse_input}

\begin{table}[t]
    \caption{
		Quantitative comparison of probabilistic scene completion in ShapeNet scene dataset with different levels of sparsity.
		The best results are marked as bold. Both CD (quality, $\downarrow$) and TMD (diversity, $\uparrow$) in tables are multiplied by $10^4$.
	}
	\label{table:result_sparse_input}
	\centering
	\resizebox{\textwidth}{!}{
	\begin{tabular}{l|ccc|ccc|ccc|ccc}
		\toprule
		 {} & \multicolumn{3}{c|}{500 points} & \multicolumn{3}{c|}{1,000 points} & \multicolumn{3}{c|}{5,000 points} &
		 \multicolumn{3}{c}{10,000 points} 
		 \\

        {Method} & min. CD & avg. CD & TMD
        & min. CD & avg. CD & TMD
        & min. CD & avg. CD & TMD
        & min. CD & avg. CD & TMD \\
        \midrule
        
        ConvOcc %
            & 36.58 & - & - 
            & 5.92 & - & - 
            & 1.65 & - & - 
            & 1.33 & - & - \\
                                
        IFNet %
            & 15.27 & - & - 
            & 12.33 & - & - 
            & 9.64 & - & - 
            & 8.55 & - & - \\
            
        GCA %
            & 6.41 & 9.77 & 31.83 
            & 4.08 & 5.90 & \textbf{16.58} 
            & 3.08 & 3.68 & \textbf{5.64} 
            & 3.02 & 3.54 & \textbf{4.92} \\
            
        \midrule
        cGCA
            & 11.30 & 17.61 & \textbf{50.58} 
            & 3.64 & 5.53 & 16.18 
            & 1.32 & 1.73 & 4.64 
            & 1.16 & 1.49 & 3.91 \\
            
        cGCA (w/ cond.)
            & \textbf{4.62} & \textbf{5.93} & 10.71 
            & \textbf{2.11} & \textbf{2.76} & 5.75 
            & \textbf{0.97} & \textbf{1.27} & 3.41 
            & \textbf{0.87} & \textbf{1.08} & 2.96 \\
		\bottomrule
	\end{tabular}
	}
\end{table}

\begin{figure}[h!]
    \centering
    \includegraphics[width=0.97\textwidth]{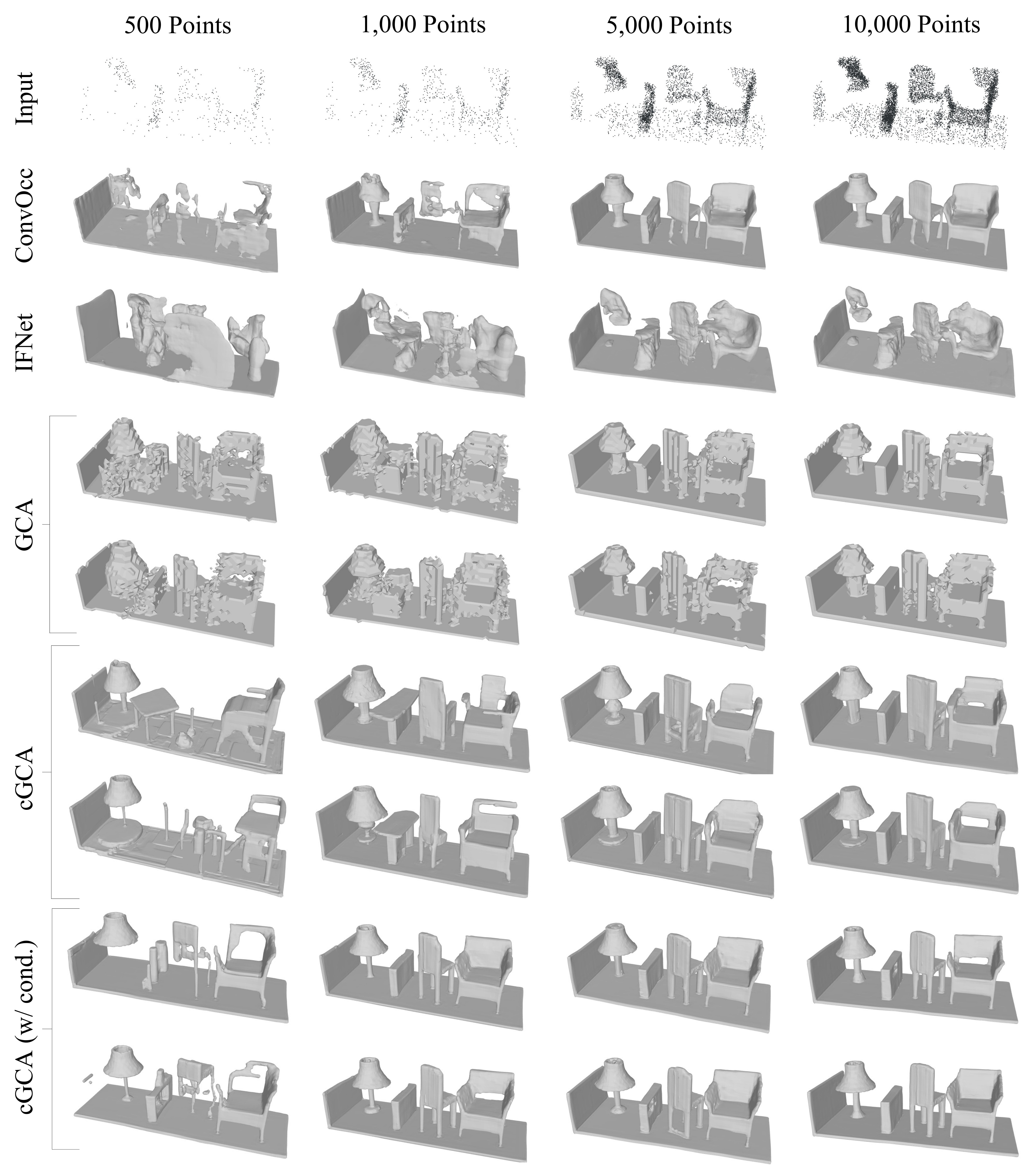}
    \caption{
        Qualitative results on ShapeNet scenes with varying level of density.
        Note that all the models are trained on dataset containing 10,000 points (rightmost column), but only tested with different density.
        While the probabilistic methods (GCA, cGCA) tries to generate the learned shapes (e.g. shades of lamp) with only 500 points, deterministic methods (ConvOcc, IFNet) tend to fill the gaps between the points of the partial observation.
        }
    \label{fig:app:sparsity}
\end{figure}

For the experiments of ShapeNet scene dataset, the models are trained and evaluated on the input where the ambiguity lies on the missing parts, but the remaining parts of input pointcloud are relatively clear by sampling 10,000 points, following the experiment setting of ConvOcc~(\cite{peng2020conv_onet}).
In this section, we further provide ambiguity to the remaining parts by sampling less points.
For the models trained on 10,000 point samples with 0.5 completeness, we evaluate the results on sparse input by sampling 500, 1,000, 5,000 and 10,000 points after applying the same iterative removal process as done in training.
As in Sec.~\ref{sec:scene_completion}, we measure the accuracy (CD) and diversity (TMD) metrics compared with ConvOcc (\cite{peng2020conv_onet}), IFNet (\cite{chibane20ifnet}), and GCA (\cite{zhang2021gca}).

Table~\ref{table:result_sparse_input} contains the quantitative comparison results.
For all the models, accuracy (CD) degrades as the input gets sparser.
However, cGCA (w/ cond.) achieves best accuracy at all times, indicating that our model is the most robust to the sparsity of input.
For probabilistic models, the diversity (TMD) increases as the density decreases.
This implies that the diversity of the reconstructions is dependent to the ambiguity of the input.
This is intuitively a natural response, since the multi-modality of plausible outcomes increases as the partial observations become less informative.

Fig.~\ref{fig:app:sparsity} visualizes the results for various sparsity.
All of the models produce less plausible reconstructions with 500 points (leftmost column of Fig.~\ref{fig:app:sparsity}) compared to that of higher density inputs.
However, we observe a clear distinction between the completions of probabilistic models (GCA, cGCA) and deterministic models (ConvOcc, IFNet).
While probabilistic models try to generate the learned shapes from the observations, such as a shade of the lamp, deterministic methods tend to fill the gaps between points.
As discussed in Sec.~\ref{sec:scene_completion}, we hypothesize that this consequence stems from the fact that deterministic methods produce blurry results since it is forced to generate a single result along out of multiple plausible shapes which are the modes of the multi-modal distribution~(\cite{goodfellow2017gan_tutorial}).
Therefore, we claim that a generative model capable of modeling multi-modal distribution should be used for shape completion task.

\subsection{Results on Non-ambiguous Input}
\label{app:non_ambiguous_input}

\begin{table}[t]
    \caption{
		Quantitative results on cGCA trained on our ShapeNet dataset, but tested with non-ambiguous input.
		The first row (training) indicates the metrics evaluated on dataset created with same removal procedure as the corresponding training dataset.
		The second row (non-ambiguous) shows the metrics where the input from the test dataset is sampled very densely without any removal from the ground truth mesh with the same trained models as the first row.
		Both CD (quality, $\downarrow$) and TMD (diversity, $\uparrow$) in tables are multiplied by $10^4$.
	}
	\label{table:result_non_ambiguous_input}
	\centering
	\resizebox{0.95\textwidth}{!}{
	\begin{tabular}{l|ccc|ccc|ccc}
		\toprule
		 {} & \multicolumn{3}{c|}{min. rate 0.2} & \multicolumn{3}{c|}{min. rate 0.5} & \multicolumn{3}{c}{min. rate 0.8} \\

        {evaluation dataset} & min. CD & avg. CD & TMD
        & min. CD & avg. CD & TMD
        & min. CD & avg. CD & TMD \\
        \midrule
        
        training
            & 2.80 & 3.88 & 10.07
            & 1.16 & 1.49 & 3.91
            & 0.69 & 0.87 & 3.05 \\
            
        non-ambiguous
            & 2.46 & 3.58 & 8.72 
            & 0.73 & 0.87 & 2.91
            & 0.61 & 0.71 & 2.72\\
		\bottomrule
	\end{tabular}
	}
\end{table}

\begin{figure}[h!]
    \centering
    \includegraphics[width=0.9\textwidth]{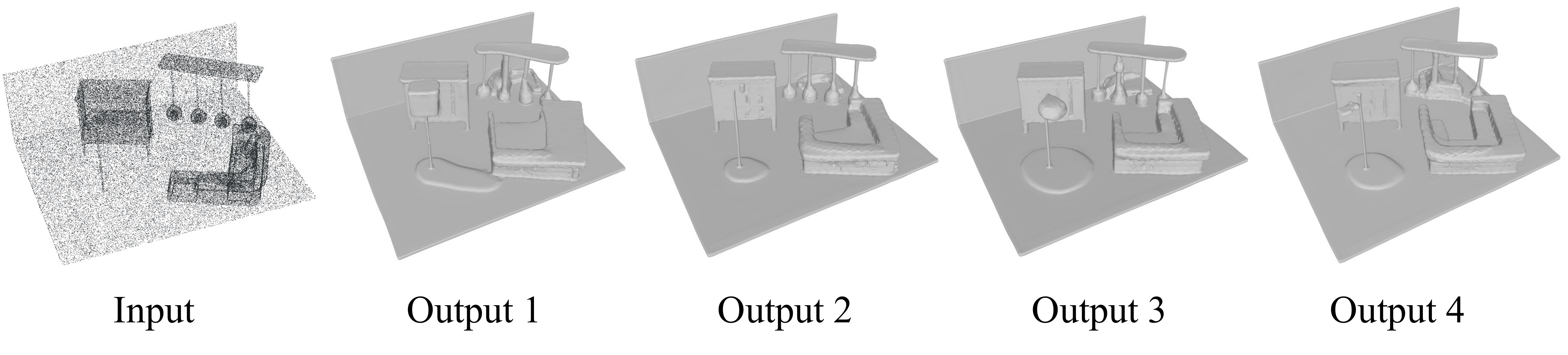}
    \caption{
        Qualitative results of cGCA tested on ShapeNet scenes where the input is non-ambiguous.
        The model is trained on completeness of 0.2.
        }
    \label{fig:app:non_ambiguous}
\end{figure}

We present experiments where cGCA is given a non-ambiguous input.
For the models trained on various level of completeness on ShapeNet scene dataset, we evaluate each model on an input presenting distinct geometry.
The input is generated by sampling 100,000 points from the mesh without any procedure of removal.

Table~\ref{table:result_non_ambiguous_input} shows the quantitative results of the experiments.
The first row shows the accuracy (CD) and diversity (TMD) scores of the models operated on test set when the removal procedure is same as that of the training set.
The second row shows the metrics when the input of the test set is sampled densely (100,000 points) without any removal.
For all models trained with varying level of completeness, accuracy increases with decreasing diversity given a clear shape compared to an ambiguous shape.
The result indicates that the diversity of the trained models is dependent on the ambiguity of the input.
This coincides with the result of Sec.~\ref{app:sparse_input} that the reconstructions of our model is dependent on the multi-modality of plausible outcomes of input.

We also observe that the diversity of reconstructions is associated with the training data.
Comparing the quantitative results on non-ambiguous input, the accuracy is low while the diversity is high when the model is trained on relatively incomplete data. %
We hypothesize that models trained with higher level of incompleteness are more likely to generate shapes from the input.
However, Fig.~\ref{fig:app:non_ambiguous} shows that while the completions of cGCA trained on highly incomplete data (min. rate 0.2) are diverse, they are quite plausible.

\section{Probabilistic Scene Completion on Real Dataset}
\label{app:scannet}
We investigate how our model behaves on indoor real-world data.
We test on the ScanNet indoor scene dataset~(\cite{dai2017scannet}), which is highly incomplete compared to other datasets, such as Matterport~(\cite{Matterport3D}).
The input is sampled by collecting 500 points/$\text{m}^2$ from each scene.
ScanNet dataset does not have a complete ground truth mesh, so we test our model trained on 3DFront~(\cite{fu20203dfront}) dataset with completness of 0.8.
We align the walls of ScanNet data to xy-axis since the scenes of 3DFront are axis aligned.
We compare our result with GCA~(\cite{zhang2021gca}).

The results are visualized in Fig.~\ref{fig:app:result_scannet}.
Our model shows diverse results (e.g. chairs, closets) and generalizes well to the real data, which has significantly different statistics compared to the training data.
Especially, the conditioned variant of our model shows better results by generalizing well to new data (e.g. tree), not found in 3DFront training dataset.
We emphasize that our model is trained on unsigned distance fields and does not require the sliding-window technique, unlike the previous methods~(\cite{peng2020conv_onet, chibane20ifnet}).

\begin{figure}[h!]
    \centering
    \includegraphics[width=0.97\textwidth]{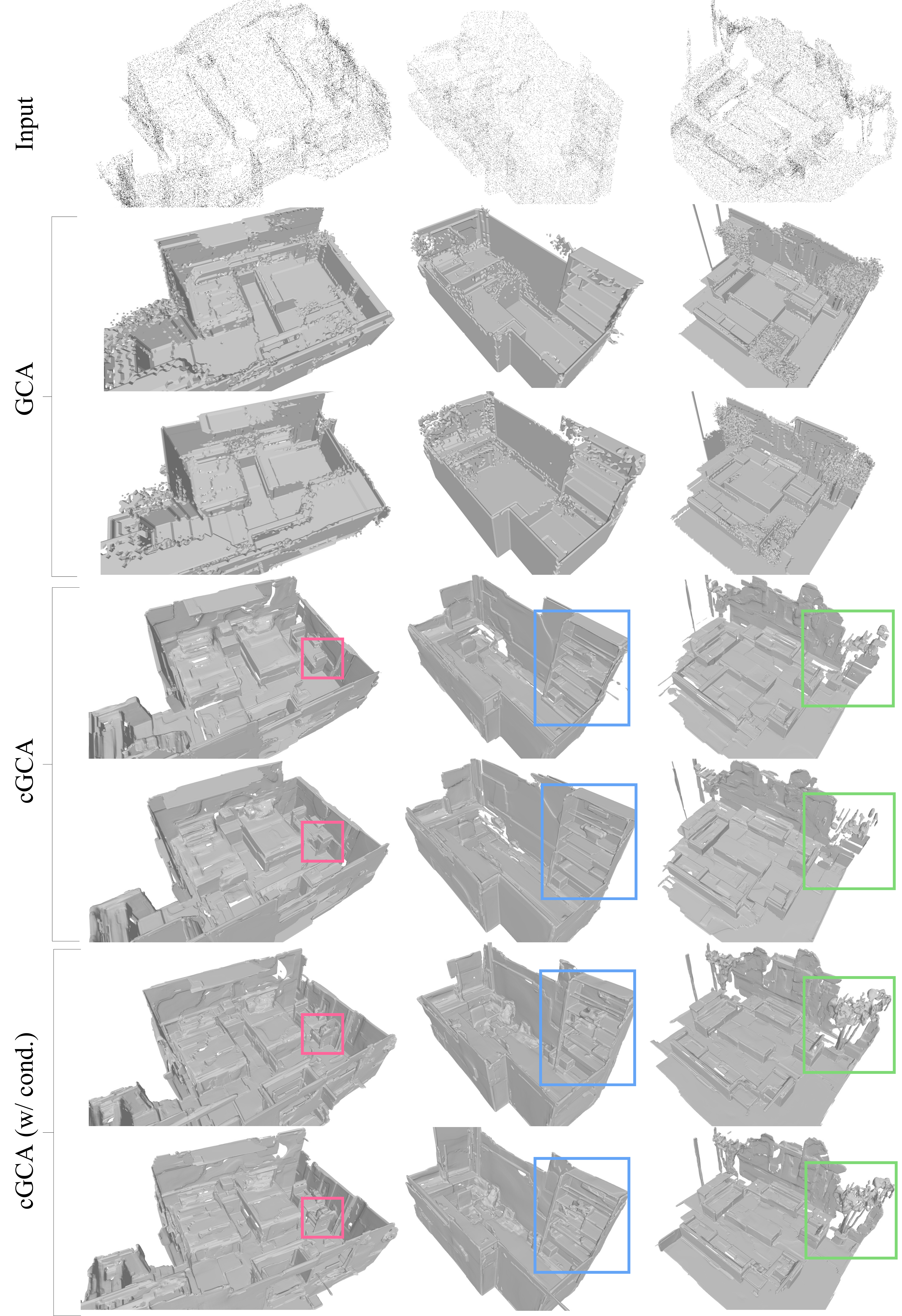}
    \caption{
        Qualitative results on ScanNet dataset with models trained on 3DFront dataset.
        Best viewed on screen.
        Multiple plausible reconstructions are shown (pink and blue box).
        cGCA (w/ cond.) shows better results for reconstructing a tree (green box) compared to that of vanilla cGCA, where a tree is never found in the training dataset.
        This allows us to infer that the conditioned variant tends to help generalize to unseen data better compared to the vanilla cGCA.
        }
    \label{fig:app:result_scannet}
\end{figure}

\newpage
\section{Additional Results for Scene Completion}
\subsection{Additional Results for ShapeNet Scene}
\label{app:result_synthetic}
\begin{figure}[h!]
    \centering
    \includegraphics[width=\textwidth]{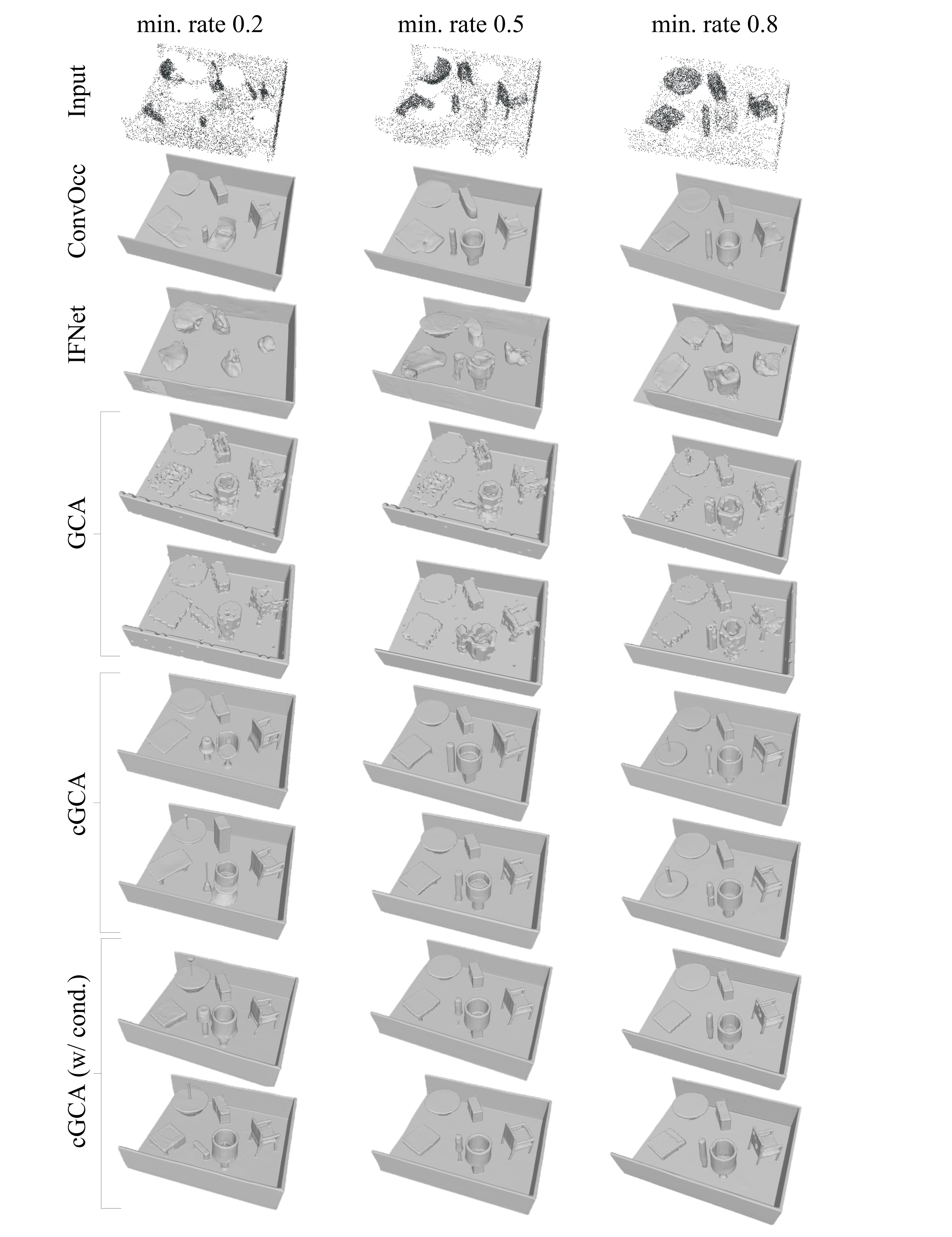}
    \caption{
        Additional qualitative results on ShapeNet scene, with varying level of completeness.
        Best viewed on screen.
        }
    \label{fig:app:result_synthetic_1}
\end{figure}
\newpage
\begin{figure}[h!]
    \centering
    \includegraphics[width=\textwidth]{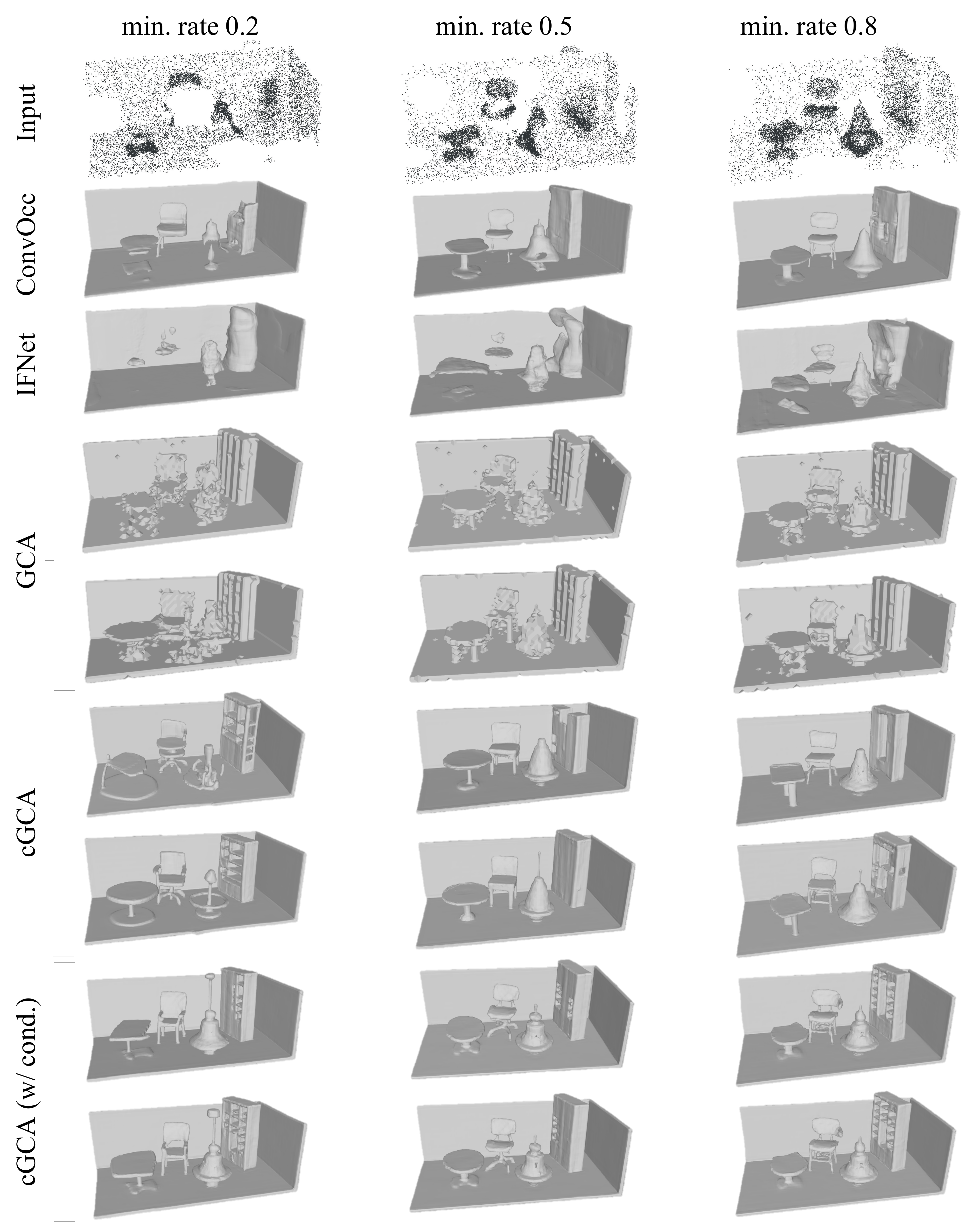}
    \caption{
        Additional qualitative results on ShapeNet scene, with varying level of completeness.
        Best viewed on screen.
        }
    \label{fig:app:result_synthetic_2}
\end{figure}

\newpage
\subsection{Additional Results for 3DFront}
\label{app:result_3dfront}
\begin{figure}[h!]
    \centering
    \includegraphics[width=0.95\textwidth]{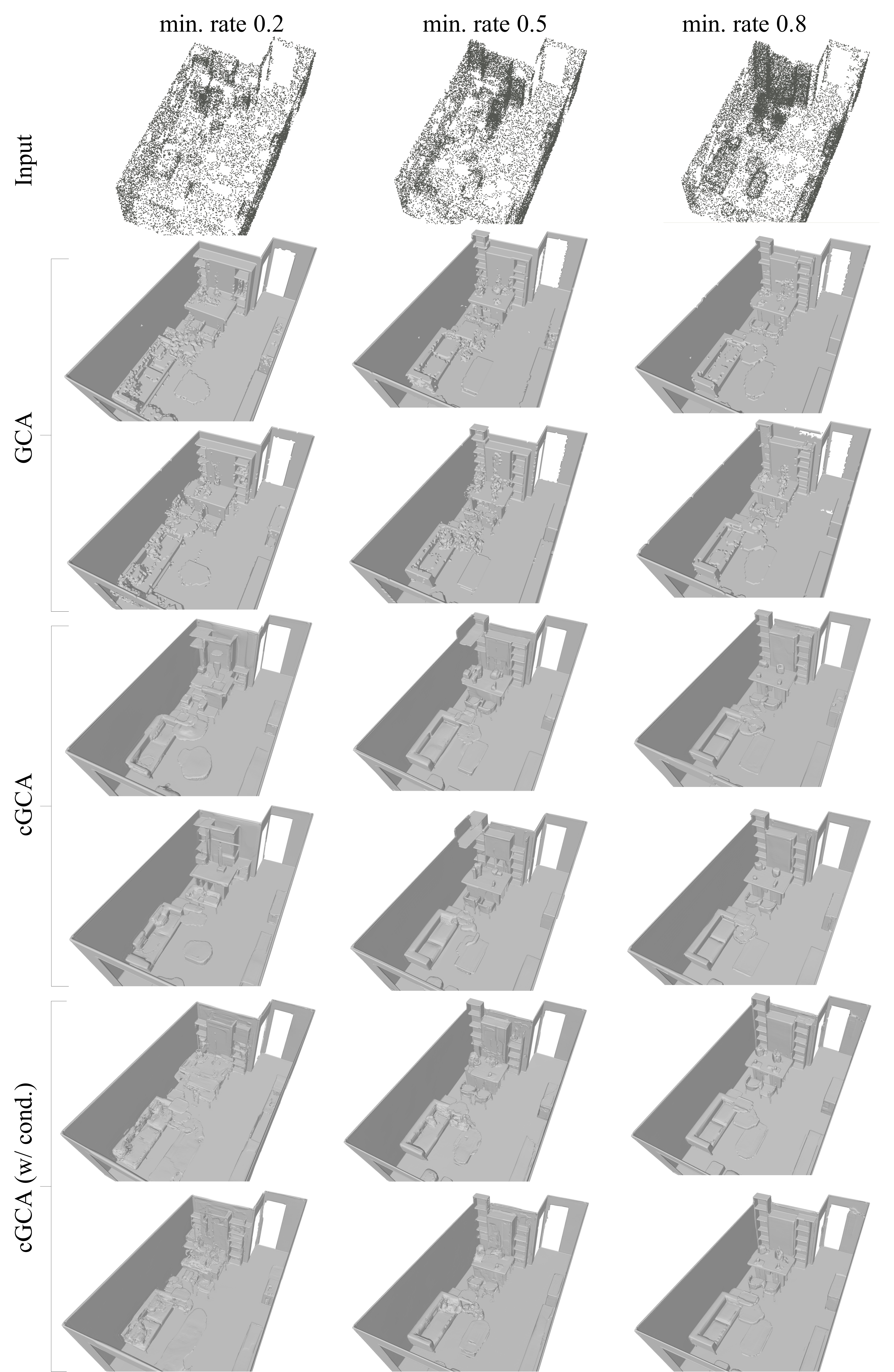}
    \caption{
        Additional qualitative results on 3DFront, with varying level of completeness.
        Best viewed on screen.
        }
    \label{fig:app:result_3dfront_1}
\end{figure}
\newpage
\begin{figure}[h!]
    \centering
    \includegraphics[width=0.95\textwidth]{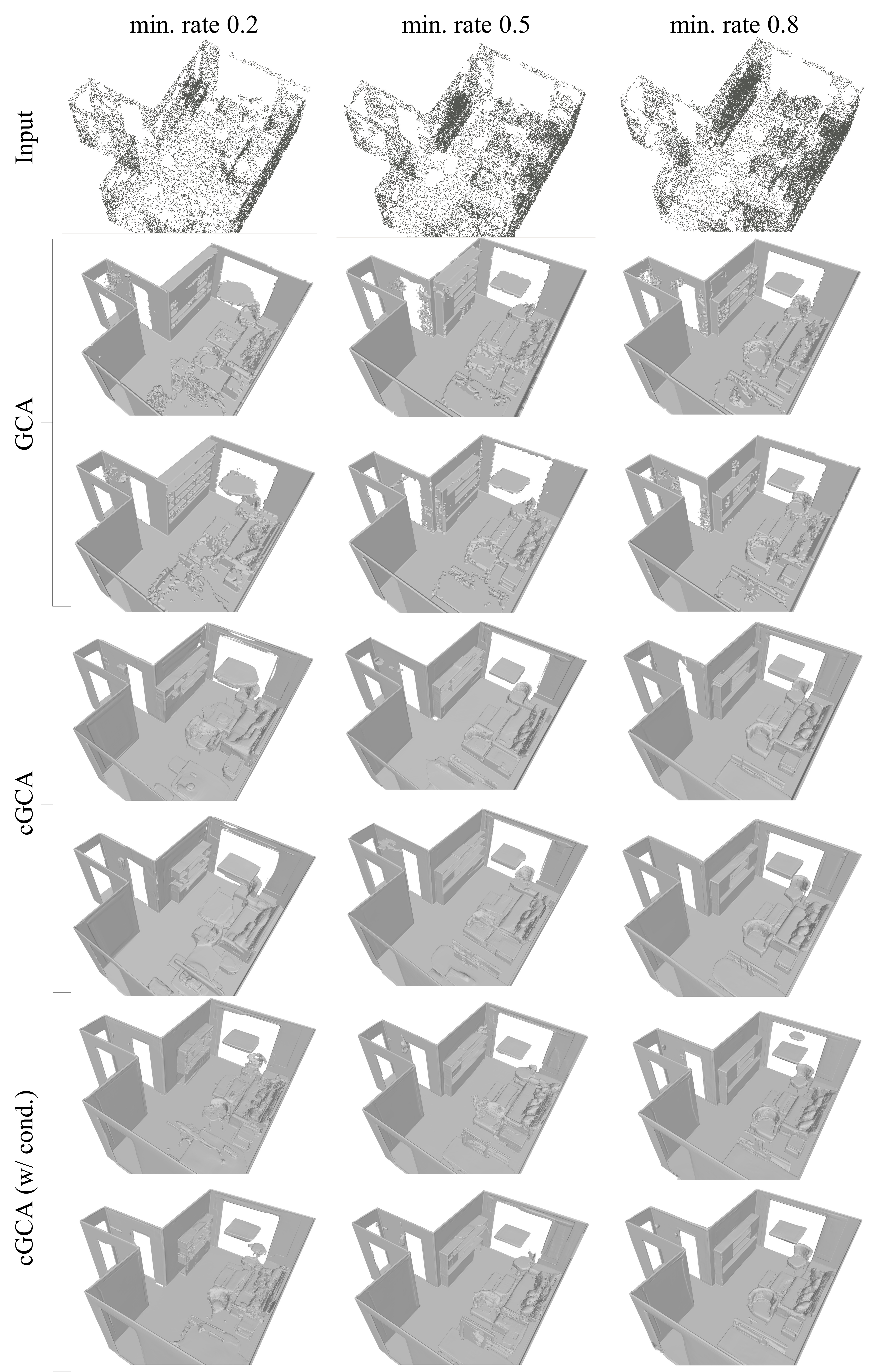}
    \caption{
        Additional qualitative results on 3DFront, with varying level of completeness.
        Best viewed on screen.
    }
    \label{fig:app:result_3dfront_2}
\end{figure}
\newpage

\section{Additional Results for Single Object Completion}
\begin{figure}[h!]
    \centering
    \includegraphics[width=0.95\textwidth]{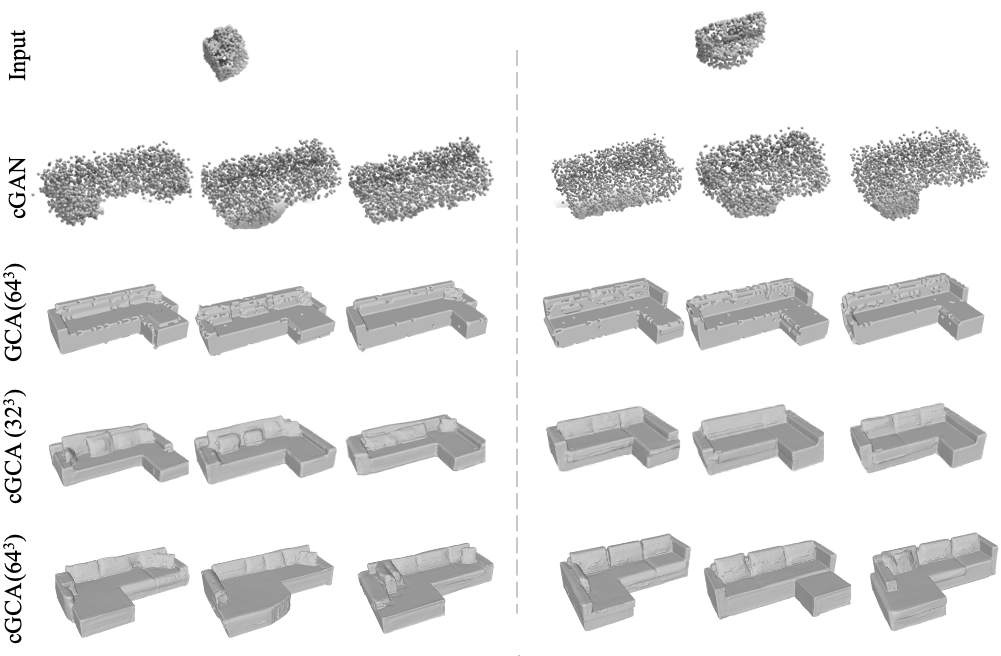}
    \caption{Additional qualitative results on ShapeNet sofa.}
    \label{fig:app:shapenet_sofa}
\end{figure}
\begin{figure}[h!]
    \centering
    \includegraphics[width=0.95\textwidth]{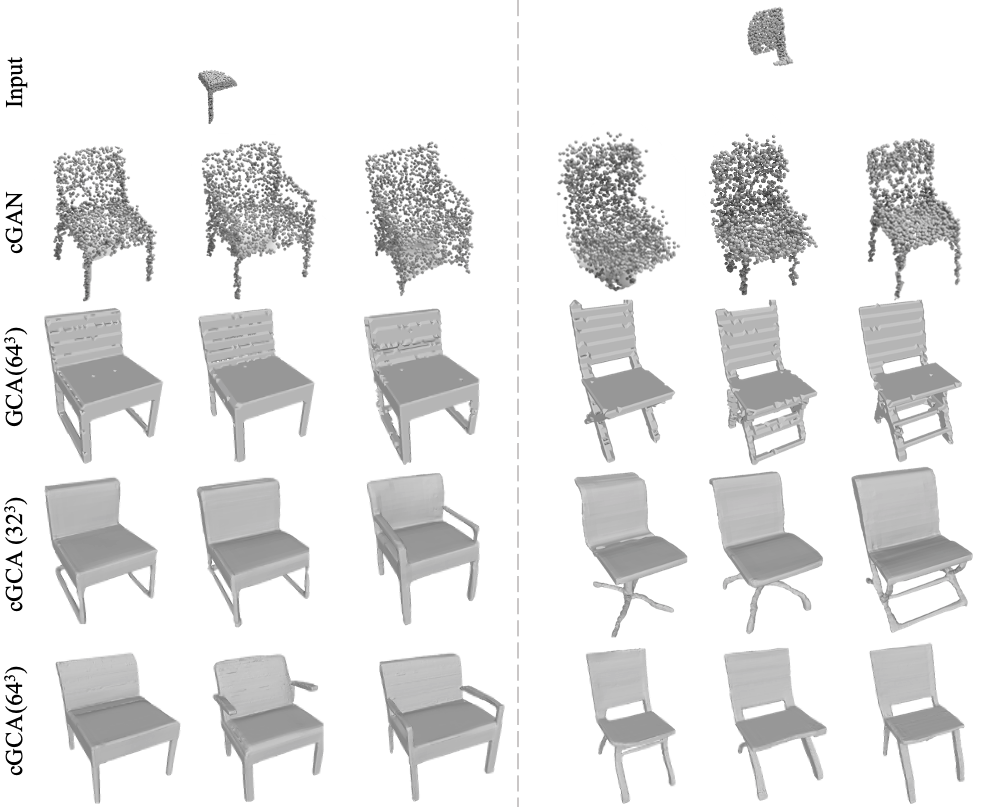}
    \caption{Additional qualitative results on ShapeNet chair.}
    \label{fig:app:shapenet_sofa}
\end{figure}
\begin{figure}[h!]
    \centering
    \includegraphics[width=0.95\textwidth]{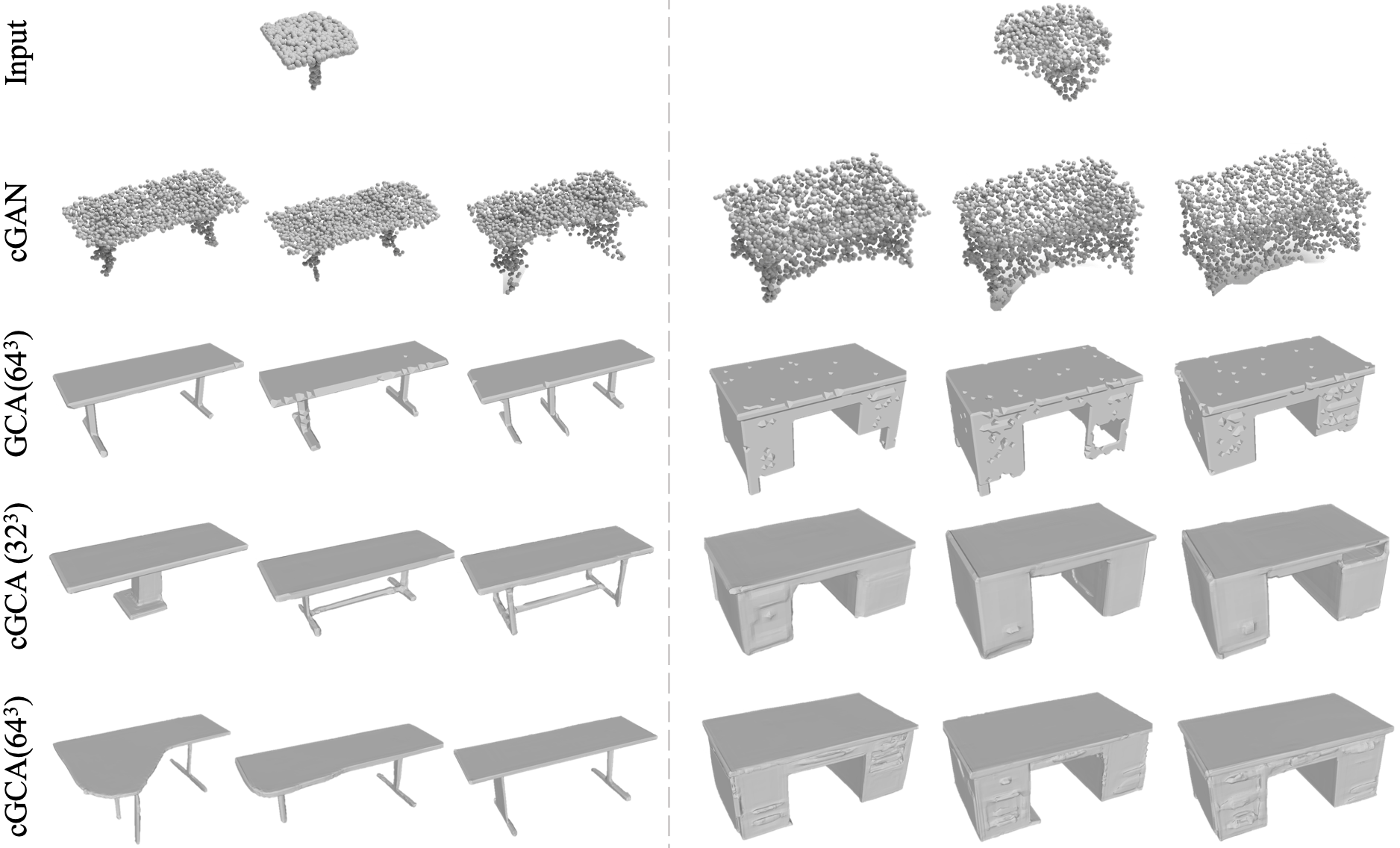}
    \caption{Additional qualitative results on ShapeNet table.}
    \label{fig:app:shapenet_table}
\end{figure}

\end{document}